\documentclass{article} 
\usepackage{iclr2023_conference,times}


\usepackage{amsmath}
\usepackage{amssymb}
\usepackage{amsthm}
\usepackage{bm}
\usepackage{mathtools} 
\usepackage{textcomp}
\usepackage{thm-restate}
\usepackage{nicefrac}
 
\usepackage[backref=page]{hyperref}
\usepackage{url}
\usepackage{graphicx}
\usepackage{subfigure}
\usepackage{cleveref} 
\usepackage{algorithm}
\usepackage{algorithmic}
\usepackage{booktabs,multirow} 
\usepackage{colortbl}
\usepackage{wrapfig}

\title{SAM as an Optimal Relaxation of Bayes}


\author{Thomas M\"ollenhoff \& Mohammad Emtiyaz Khan \\
  RIKEN Center for Advanced Intelligence Project\\
  Tokyo, Japan\\
\texttt{\{thomas.moellenhoff,emtiyaz.khan\}@riken.jp} 
}

%

\renewcommand{\textuparrow}{$\uparrow$}
\renewcommand{\textdownarrow}{$\downarrow$}
\newcommand\sambox{\fcolorbox{white}{blue!35}}
\newcommand\bsambox{\fcolorbox{white}{red!35}}
\newcommand{\R}{\mathbb{R}}

\newcommand{\dd}{\,\mathrm{d}}
\newcommand{\ip}[2]{\langle #1, #2 \rangle}
\newcommand{\norm}[1]{\| #1 \|}

\newcommand{\bbE}{\mathbb{E}}



\newcommand{\bs}{\mathbf{s}}

\newcommand{\bbb}{\mathbf{b}}

\newcommand{\bg}{\mathbf{g}}

\newcommand{\bl}{\bm{\lambda}}
\newcommand{\be}{\bm{\epsilon}}
\newcommand{\bt}{\bm{\theta}}

\newcommand{\bmu}{\bm{\mu}}

\newcommand{\T}{\mathrm{T}}

\newcommand{\bmm}{\mathbf{m}}

\newcommand{\ind}{\mathbf{i}}


\newcommand{\cD}{\mathcal{D}}

\newcommand{\cM}{\mathcal{M}}
\newcommand{\cN}{\mathcal{N}}


\newcommand{\elbo}{\mathcal{L}}

\newcommand{\elboq}{\mathcal{E}_{\text{Bayes}}}

\newcommand{\EnergySAM}{\mathcal{E}_{\text{SAM}}}




\DeclareMathOperator{\dom}{dom}

\DeclareMathOperator{\tr}{tr}
\DeclareMathOperator{\diag}{diag}
\newcommand{\Id}{\mathrm{I}}

\DeclareMathOperator*{\argmin}{arg\,min}

\DeclarePairedDelimiterX{\infdivx}[2]{{}}{{}}{%
  \left( #1\,\delimsize\|\,#2\right)%
}

\DeclareMathOperator{\KLop}{KL}
\newcommand{\myKL}{\mathbb{D}_{\KLop}\infdivx}


\definecolor{tablecolor}{rgb}{0.8,0.8,0.8}



\newcommand{\loss}{\ell}

\newcommand{\vparam}{\vtheta}
\newcommand{\param}{\theta}

\newcommand{\minibatch}{\mathcal{M}}

\newcommand\cut[1]{}





\newcommand{\squishlist}{
   \begin{list}{$\bullet$}
    { \setlength{\itemsep}{0pt}      \setlength{\parsep}{3pt}
      \setlength{\topsep}{3pt}       \setlength{\partopsep}{0pt}
      \setlength{\leftmargin}{1.5em} \setlength{\labelwidth}{1em}
      \setlength{\labelsep}{0.5em} } }

\newcommand{\squishlisttwo}{
   \begin{list}{$\bullet$}
    { \setlength{\itemsep}{0pt}    \setlength{\parsep}{0pt}
      \setlength{\topsep}{0pt}     \setlength{\partopsep}{0pt}
      \setlength{\leftmargin}{2em} \setlength{\labelwidth}{1.5em}
      \setlength{\labelsep}{0.5em} } }

\newcommand{\squishend}{
    \end{list}  }









{}
{}
{}

\newcommand{\half}{\mbox{$\frac{1}{2}$}}
\newcommand{\real}{\mbox{$\mathbb{R}$}}

\newcommand{\rnd}[1]{\left(#1\right)}
\newcommand{\sqr}[1]{\left[#1\right]}
\newcommand{\crl}[1]{\left\{#1\right\}}
\newcommand{\myang}[1]{\langle#1\rangle}
\newcommand{\myexpect}{\mathbb{E}}

\newcommand{\gauss}{\mbox{${\cal N}$}}







\newcommand{\myvec}[1]{\mbox{$\mathbf{#1}$}}
\newcommand{\myvecsym}[1]{\mbox{$\boldsymbol{#1}$}}

\newcommand{\vone}{\mbox{$\myvecsym{1}$}}

\newcommand{\vepsilon}{\mbox{$\myvecsym{\epsilon}$}}

\newcommand{\vmu}{\mbox{$\myvecsym{\mu}$}}

\newcommand{\vlambda}{\mbox{$\myvecsym{\lambda}$}}

\newcommand{\vomega}{\mbox{$\myvecsym{\omega}$}}

\newcommand{\vtheta}{\mbox{$\myvecsym{\theta}$}}

\newcommand{\vsigma}{\mbox{$\myvecsym{\sigma}$}}
\newcommand{\vSigma}{\mbox{$\myvecsym{\Sigma}$}}

\newcommand{\va}{\mbox{$\myvec{a}$}}
\newcommand{\vb}{\mbox{$\myvec{b}$}}
\newcommand{\vd}{\mbox{$\myvec{d}$}}
\newcommand{\ve}{\mbox{$\myvec{e}$}}

\newcommand{\vg}{\mbox{$\myvec{g}$}}

\newcommand{\vm}{\mbox{$\myvec{m}$}}

\newcommand{\vs}{\mbox{$\myvec{s}$}}

\newcommand{\vu}{\mbox{$\myvec{u}$}}
\newcommand{\vv}{\mbox{$\myvec{v}$}}

\newcommand{\vA}{\mbox{$\myvec{A}$}}

\newcommand{\vI}{\mbox{$\myvec{I}$}}

\newcommand{\vT}{\mbox{$\myvec{T}$}}

\newcommand{\vV}{\mbox{$\myvec{V}$}}









\newcommand{\trace}{\mbox{Tr}}






%


\graphicspath{{../arXiv/}}

 \crefname{section}{Sec.}{Sections}
 \crefname{appendix}{App.}{Appendices}
 \crefname{algorithm}{Alg.}{Algorithms}
 \crefname{equation}{Eq.}{Eqs.}
 \crefname{figure}{Fig.}{Fig.}
 \creflabelformat{equation}{#2\textup{#1}#3} 

\iclrfinalcopy 
\begin{document}

\maketitle

\begin{abstract}
     Sharpness-aware minimization~(SAM) and related adversarial deep-learning methods can drastically improve generalization, but their underlying mechanisms are not yet fully understood.
   Here, we establish SAM as a relaxation of the Bayes objective where the expected negative-loss is replaced by the optimal convex lower bound, obtained by using the so-called Fenchel biconjugate.
   The connection enables a new Adam-like extension of SAM to
   automatically obtain reasonable uncertainty estimates, while
   sometimes also improving its accuracy. By connecting adversarial
   and Bayesian methods, our work opens a new path to
   robustness.
\end{abstract}

\section{Introduction}
\label{sec:intro}
   Sharpness-aware minimization~(SAM)~\citep{FoKl21} and related adversarial methods~\citep{ZhZh20,WuXi20,kim2022fisher} have been shown to improve generalization, calibration, and robustness in various applications of deep learning~\citep{ChHs21,BaMo22}, but the reasons behind their success are not fully understood.
   The original proposal of SAM was geared towards biasing training trajectories towards flat minima, and the effectiveness of such minima has various Bayesian explanations, for example, those relying on the optimization of description lengths~\citep{HiVC93,HoSc97}, PAC-Bayes bounds~\citep{DzRo17,DzRo18,JiNe20,Al21}, or marginal~likelihoods~\citep{SmLe18}.
   However, SAM is not known to directly optimize any such Bayesian criteria, even though some connections to PAC-Bayes do exist~\citep{FoKl21}.
   The issue is that the `max-loss' used in SAM fundamentally departs from a Bayesian-style `expected-loss' under the posterior; see \Cref{fig:fig1a}. The two methodologies are distinct, and little is known about their relationship.
   
   Here, we establish a connection by using a relaxation of the Bayes objective where the expected negative-loss is replaced by the tightest convex lower bound. The bound is optimal, and obtained by Fenchel biconjugates which naturally yield the maximum loss used in SAM (\Cref{fig:fig1a}).
   From this, SAM can be seen as optimizing the relaxation of Bayes to find the mean of an isotropic Gaussian posterior while keeping the variance fixed.
   Higher variances lead to smoother objectives, which biases the solution towards flatter regions (\Cref{fig:fig1b}).
   Essentially, the result connects SAM and Bayes through a Fenchel biconjugate that replaces the expected loss in Bayes by a maximum loss. 

What do we gain by this connection?~The generality of our result makes it possible to easily combine the complementary strengths of SAM and Bayes. For example, we show that the relaxed-Bayes objective can be used to learn the variance parameter, which yields an Adam-like extension of SAM~(\Cref{fig:bSAM}). The variances are cheaply obtained from the vector that adapts the learning rate, and SAM's hyperparameter is adjusted for each parameter dimension via the variance vector. 
The extension improves the performance of SAM on standard benchmarks while giving comparable uncertainty estimates to the best Bayesian methods.

Our work complements similar extensions for SGD, RMSprop, and Adam from the Bayesian deep-learning community \citep{GaGh16,mandt2017stochastic, KhNi18, OsSw19}.
So far there is no work on such connections between SAM and Bayes, except for a recent empirical study by \citet{kaddour2022questions}.
\citet{HuKn22} give an adversarial interpretation of Bayes, but it does not cover methods like SAM. Our work here is focused on connections to approximate Bayesian methods, which can have both theoretical and practical impact.
 We discuss a new path to robustness by connecting the two fields of adversarial and Bayesian methodologies.
   
\begin{figure}[t!]
  \centering
  \subfigure[SAM vs Bayes]{\includegraphics[height=3.9cm]{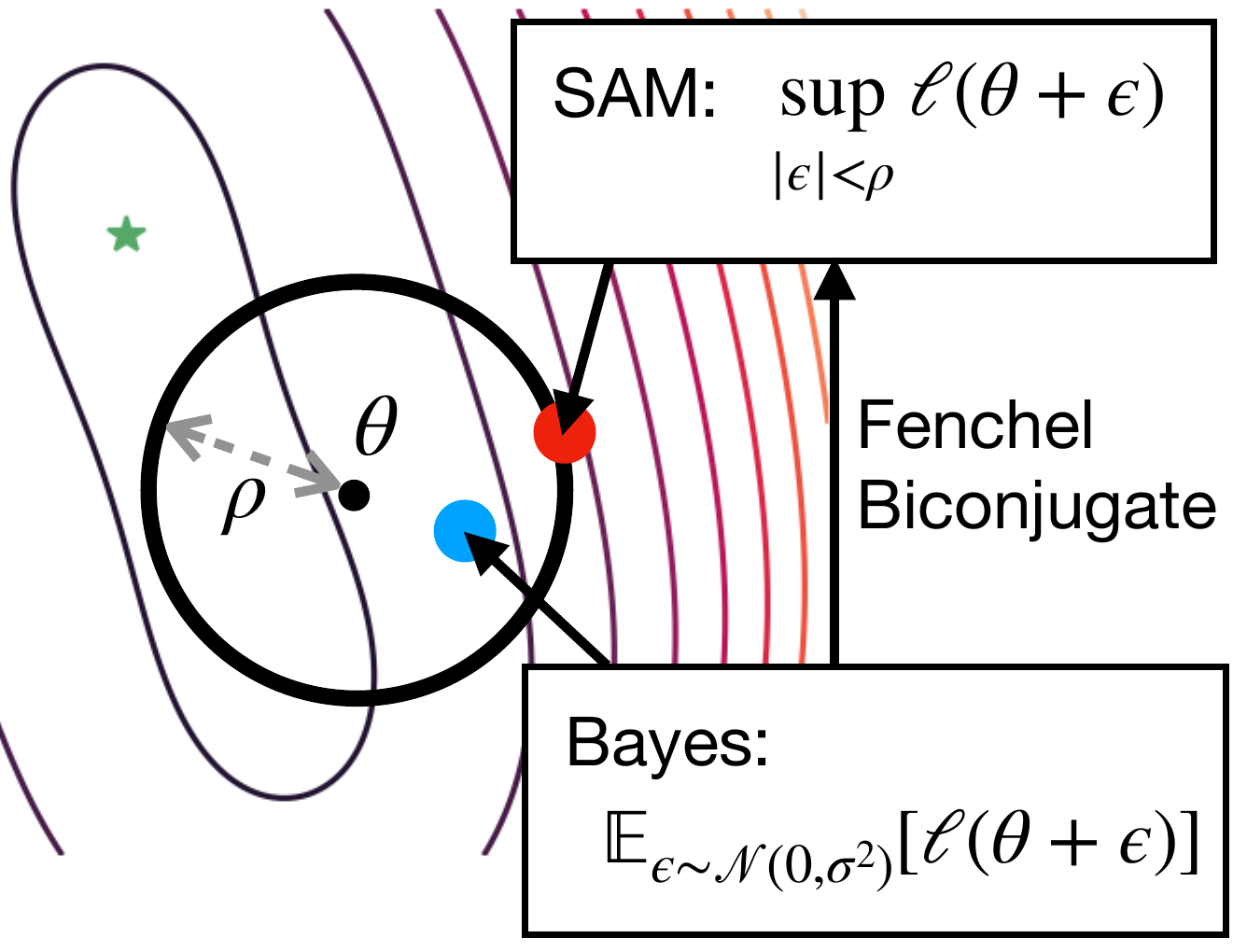} \label{fig:fig1a}}
  \hspace{0.21cm}
   \subfigure[Higher variances give smoother objectives]{\includegraphics[height=3.9cm]{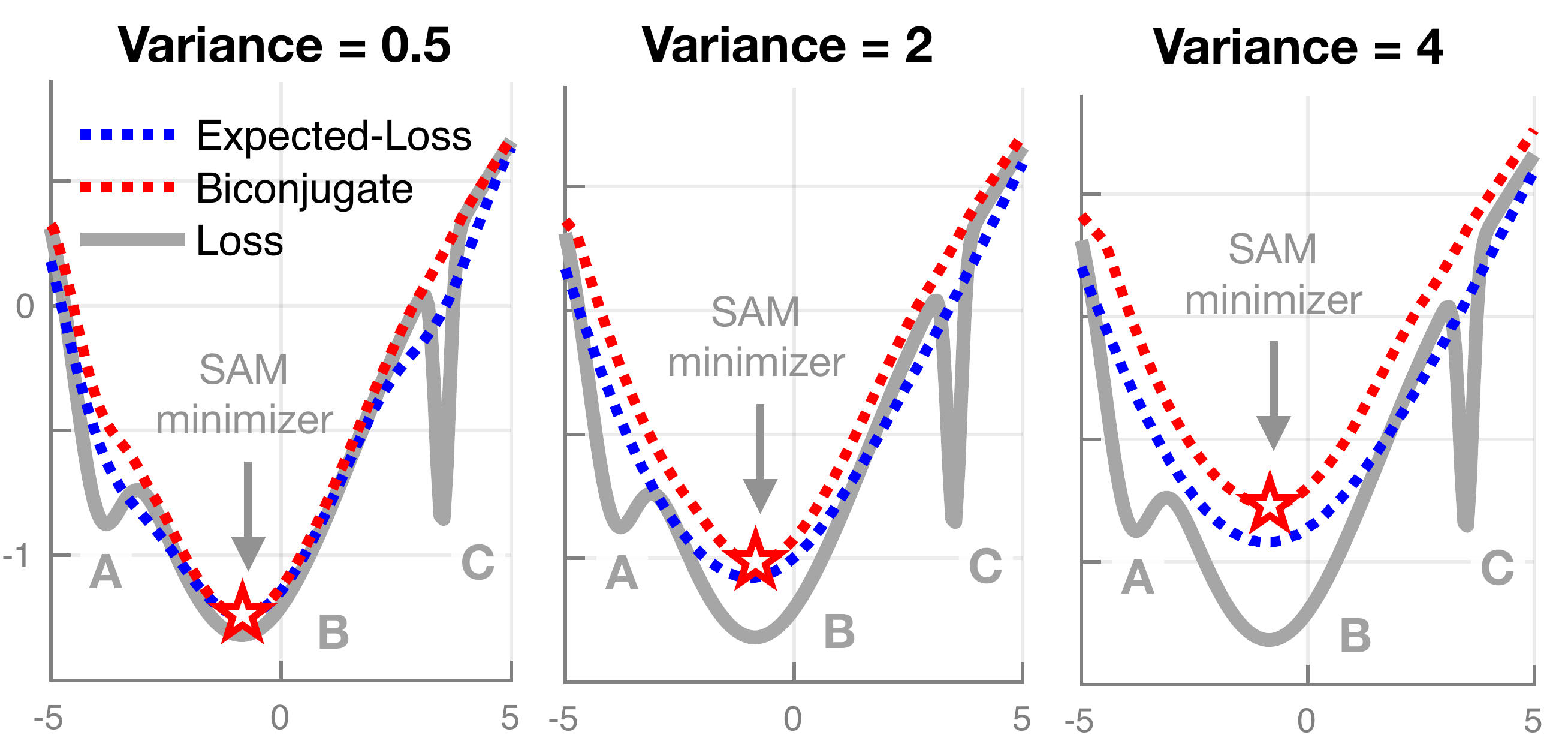} \label{fig:fig1b}}
   \caption{Panel (a) highlights the main difference between SAM and Bayes. SAM uses max-loss (the red dot), while Bayes uses expected-loss (the blue dot shows $\loss(\theta + \epsilon)$ for an $\epsilon\sim \gauss(\epsilon\,|\,0, \sigma^2)$). 
   Our main result in \Cref{thm:equiv} connects the two by using the optimal convex relaxation (Fenchel biconjugate) of the negative expected loss.
   It shows that the role of $\rho$ and $\sigma$ are exactly the same, and a SAM minimizer obtained for a fixed $\rho$ can always be recovered from the relaxation for some $\sigma$. An example is shown in Panel~(b) for a loss (gray line) with 3 local minima indicated by A, B, and C. The expected loss is smoother (blue line) but the relaxation, which upper bounds it, is even more smooth (red line). 
   Higher $\sigma$ give smoother objectives where sharp minima A and C slowly disappear.
   The SAM minimizer is shown with a red star which matches the minimizer of the relaxation.
   }
  \label{fig:overview}
\end{figure}

\section{SAM as an Optimal Relaxation of Bayes}
\label{sec:samisbayes}

In SAM, the training loss $\ell(\bt)$ is replaced by the maximum loss within a ball of radius $\rho>0$ around the parameter $\bt \in \Theta \subset \R^P$, as shown below for an $\ell_2$-regularized problem,
\begin{equation}
   \EnergySAM(\bt; \rho, \delta) = \sup_{\norm{\be} \leq \rho} \, \ell(\bt + \be)  + \frac{\delta}{2} \norm{\bt}^2,
   \label{eq:SAM}
\end{equation}
with $\delta>0$ as the regularization parameter.
The use of `maximum' above differs from Bayesian strategies that use `expectation', for example, the variational formulation by \cite{Ze88}, 
\begin{equation}
   \elbo(q) = \bbE_{\bt \sim q} \left[ - \ell(\bt) \right] - \myKL{q(\vparam)}{p(\vparam)},
   \label{eq:elbo}
\end{equation}
where $q(\bt)$ is the generalized posterior \citep{Zh99, Ca07}, $p(\bt)$ is the prior, and $\myKL{\cdot}{\cdot}$ is the Kullback-Leibler divergence (KLD).
For an isotropic Gaussian posterior $q(\bt) = \cN(\bt \, | \, \bmm,
\sigma^2\mathbf{I})$ and prior $p(\bt) = \cN(\bt \, | \, 0, \mathbf{I}/\delta)$, the objective with respect to the mean $\bmm$ becomes
\begin{equation}
   \elboq(\bmm; \sigma, \delta) = \bbE_{\be' \sim \cN(0,\sigma^2\mathbf{I}) } \left[ \ell(\bmm + \be') \right] + \frac{\delta}{2} \norm{\bmm}^2,
   \label{eq:isogausselbo}
\end{equation}
which closely resembles \Cref{eq:SAM}, but with the maximum replaced by an expectation.
The above objective is obtained by simply plugging in the posterior and prior, then collecting the terms that depend on $\vm$, and finally rewriting $\vparam = \vm + \vepsilon'$.

There are some resemblances between the two objectives which indicate that they might be connected. For example, both use local neighborhoods: the isotropic Gaussian can be seen as a `softer' version of the `hard-constrained' ball used in SAM. The size of the neighborhood is decided by their respective parameters ($\sigma$ or $\rho$ in~\Cref{fig:fig1a}).
But, apart from these resemblances, are there any deeper connections? Here, we answer this question.

We will show that \Cref{eq:SAM} is equivalent to optimizing with respect to $\vm$ an \emph{optimal} convex relaxation of \Cref{eq:isogausselbo} while keeping $\sigma$ fixed. See \Cref{thm:equiv} where an equivalence is drawn between $\sigma$ and $\rho$, revealing similarities between the mechanisms used in SAM and Bayes to favor flatter regions. \Cref{fig:fig1b} shows an illustration where the relaxation is shown to upper bound the expected loss.
We will now describe the construction of the relaxation based on Fenchel biconjugates.

\begin{figure}[t!]
  \begin{tabular}{cc}
    \begin{minipage}{0.333\linewidth}
    \subfigure[Duality of exponential family]{\includegraphics[width=\linewidth]{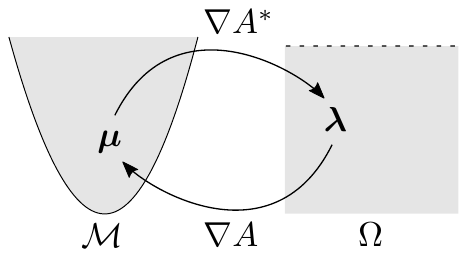}\label{fig:fig2a}}
    \subfigure[Biconjugate of expected loss]{\includegraphics[width=\linewidth]{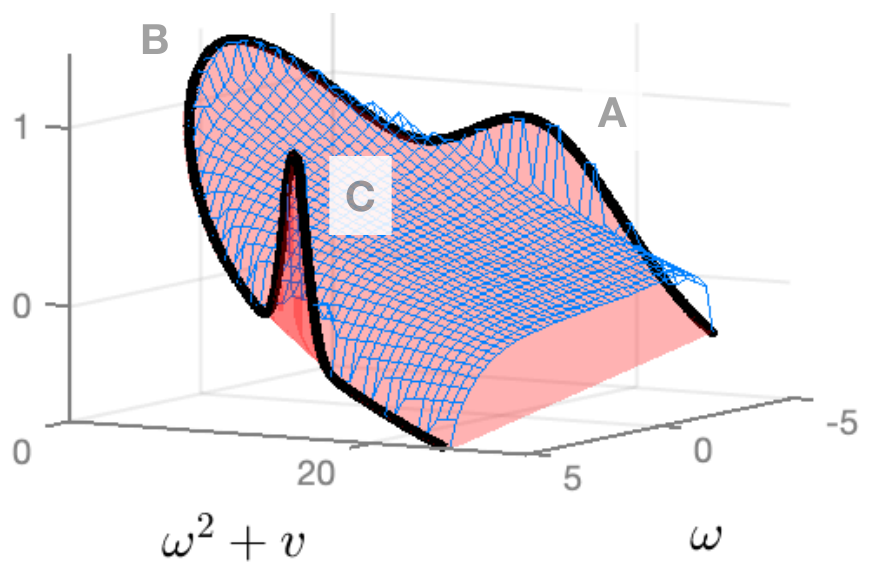}\label{fig:fig2b}}
      \end{minipage}
    &    \begin{minipage}{0.635\linewidth}
      \subfigure[Biconjugate and expected loss in $(\omega, v)$-parameterization]{\includegraphics[width=\linewidth, trim={0 0 0cm 0cm}, clip]{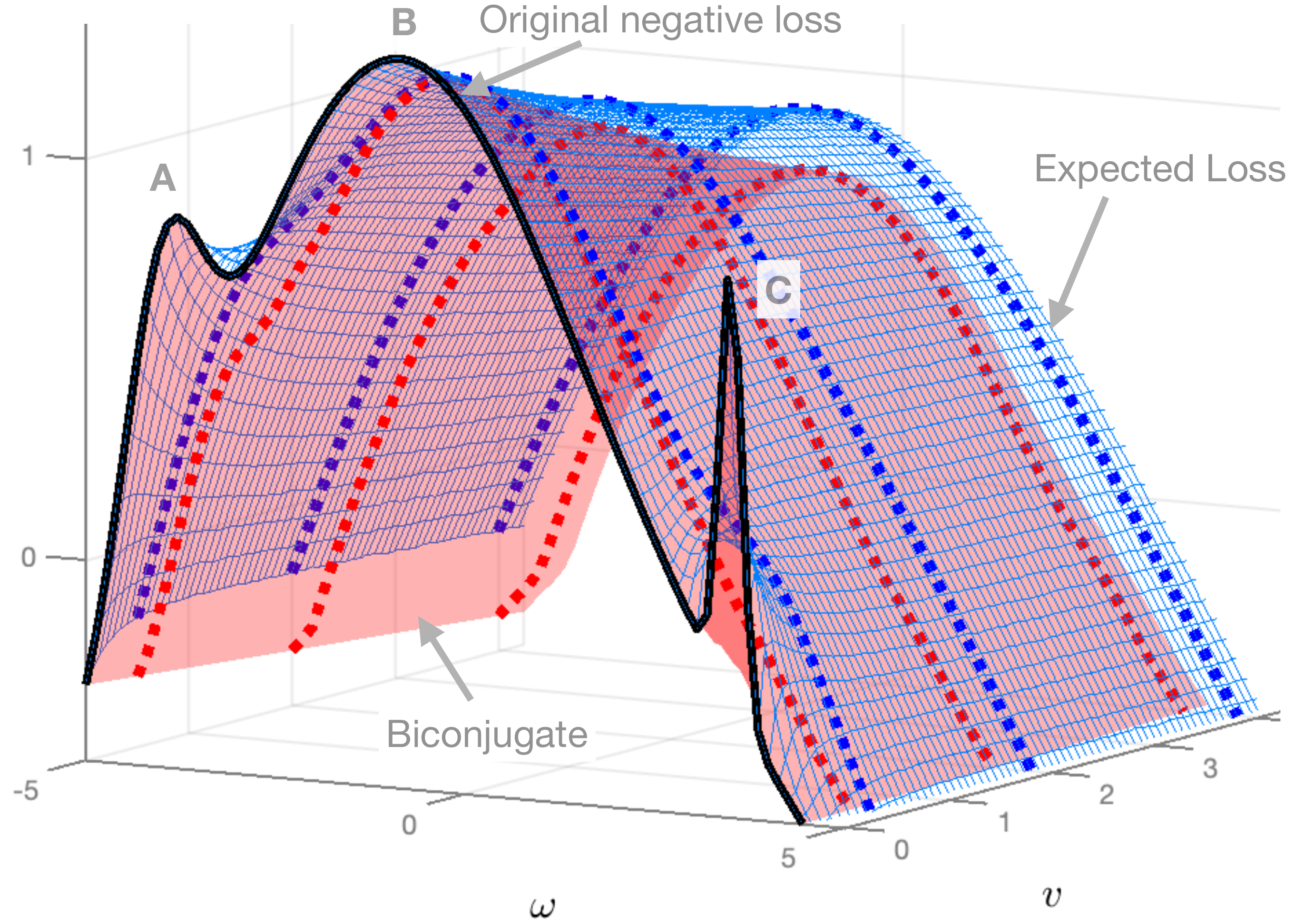}\label{fig:fig2c}}
      \end{minipage}
 \end{tabular}
   \caption{Panel (a) shows the dual-structure of exponential-family for a Gaussian defined in \Cref{eq:gaussian_params}. The space $\mathcal{M}$ lies above the parabola, while $\Omega$ is a half space.
   Panel~(b) shows $f(\vmu)$ and its biconjugate $f^{**}(\vmu)$ with the blue mesh and red surface respectively. We use the loss from~\Cref{fig:fig1b}. The $f^{**}$ function perfectly matches the original loss at the boundary where $v\to 0$ (black curve), clearly showing the regions for three minima A, B and C.
   Panel~(c) shows the same but with respect to the standard mean-variance $(\omega, v)$ parameterization. 
   Here, convexity of $f^{**}$ is lost, but we see more clearly the negative $\loss(\omega)$ appear at $v \to 0$ (black curve). At this limit, we also get $f(\vmu) = f^{**}(\vmu) = -\loss(\omega)$, but for higher variances, both $f$ and $f^{**}$ are smoothed version of $\loss(\omega)$. The slices at $v= 0.5, 2,$ and 4 give rise to the curves shown in \Cref{fig:fig1b}.
   }
  \label{fig:details}
\end{figure}

\subsection{From expected-loss to max-loss via Fenchel biconjugates}
\label{sec:conjugates}
We use results from convex duality to connect expected-loss to max-loss. Our main result relies on Fenchel conjugates~\citep{HiLe01}, also known as convex conjugates. Consider a function $f(\vu)$, defined over $\vu\in \mathcal{U}$ and taking values in the extended real line. Given a dual space $\vv\in\mathcal{V}$, the Fenchel biconjugate $f^{**}$ and conjugate $f^*$ are defined as follows,
\begin{equation}
   f^{**}(\vu) = \sup_{\text{\vv}^{'} \in \mathcal{V}}\,\, \myang{\vv',\vu} - f^*(\vv'), \quad\quad ~~ f^{*}(\vv') =
   \sup_{\text{\vu}^{''} \in \, \mathcal{U}}\,\, \myang{\vu'',\vv'} - {f(\vu'')},
  \label{eq:convex_conj}
\end{equation}
where $\myang{\cdot,\cdot}$ denotes the inner product.
Note that $f$ need not be convex, but both $f^*$ and $f^{**}$ are convex functions. Moreover, the biconjugate is the \emph{optimal} convex lower bound: $f(\vu) \ge f^{**}(\vu)$.

We will use the conjugate functions to connect the max-loss in SAM to the expected-loss involving regular, minimal exponential-family distributions \citep{WaJo08} of the following form\footnote{The notation used in the paper is summarized in~\Cref{table:notations} in \Cref{app:notation}.}, 
\begin{equation}
   q_{\bl}(\bt) = \exp \left( \ip{\bl}{\vT(\bt)} - A(\bl) \right),
   \label{eq:exp_family_def}
\end{equation}
with natural parameter $\vlambda \in \Omega$, sufficient statistics $\vT(\vparam)$, log-partition function $A(\vlambda)$, and a base measure of 1 with
$\Omega = \left\{ \vlambda : A(\vlambda) < \infty \right\}$. 
Dual spaces naturally arise for such distributions and can be used to define conjugates. For every $\vlambda\in\Omega$, there exists a unique `dual' parameterization in terms of the expectation parameter $\vmu = \myexpect_{\text{\vparam} \sim q_{\text{\vlambda}}}[\vT(\vparam)]$.
The space of all valid $\vmu$ satisfies $\vmu = \nabla A(\vlambda)$, and all such $\vmu$ form the interior of a dual space which we denote by $\mathcal{M}$.
We can also write $\vlambda$ in terms of $\vmu$ using the inverse $\vlambda = \nabla A^*(\vmu)$, and in general, both $\vmu$ and $\vlambda$ are valid parameterizations.

In what follows, we will always use $\vmu$ to parameterize the distribution, denoting $q_{\text{\vmu}}(\vparam)$.
As an example, for a Gaussian $\gauss(\theta \,|\, \omega, v)$ over scalar $\param$ with mean $\omega$ and variance $v>0$, we have,
\begin{equation}
   \vT(\param) = (\param, \param^2),\quad
   \vmu = \rnd{\omega, \omega^2 + v}, \quad
   \vlambda = \rnd{\omega/v, -1/(2v)}.
   \label{eq:gaussian_params}
\end{equation}
Because $v>0$, the space $\mathcal{M}$ is the epigraph of parabola $(\omega, \omega^2)$, and $\Omega$ is a negative half space because the second natural parameter is constrained to be negative; see \Cref{fig:fig2a}. The dual spaces can be used to define the Fenchel conjugates.

We will use the conjugates of the `expected negative-loss' acting over $\vmu\in\text{interior}(\mathcal{M})$, 
\begin{equation}
   f(\vmu) = \myexpect_{\text{\vparam} \sim q_{\text{\vmu}}} \sqr{ -\loss(\vparam) }.
   \label{eq:neg-expected-loss}
\end{equation}
For all $\vmu\notin\text{interior}(\mathcal{M})$, we set $f(\vmu) = +\infty$. Using \Cref{eq:convex_conj}, we can obtain the biconjugate,
\begin{align}
   f^{**}(\vmu) &= \sup_{\text{\vlambda}^{'} \in \Omega} \,\, \myang{\vlambda', \vmu} -  \sqr{ \, \sup_{\text{\vmu}^{''} \in\mathcal{M}} \, \, \ip{\vmu''}{\vlambda'} - \myexpect_{\text{\vparam} \sim q_{\text{\vmu}^{''}}} \sqr{ -\loss(\vparam) \, } }, \label{eq:biconj_def}
\end{align}
where we use $\vlambda'$ and $\vmu''$ to denote arbitrary points from $\Omega$ and $\mathcal{M}$, respectively, and avoid confusion to the change of coordinates $\vmu' = \nabla A(\vlambda')$. 

We show in \Cref{app:fenchel_derivation} that, for an isotropic Gaussian, the biconjugate takes a form where max-loss automatically emerges. We define $\vmu$ to be the expectation parameter of a Gaussian $\gauss(\vparam\,|\,\vomega, v\vI)$, and $\vlambda'$ to be the natural parameter of a (different) Gaussian $\gauss(\vparam\,|\,\vm, \sigma^2\vI)$. The equation below shows the biconjugate with the max-loss included in the second term,
\begin{align}
   f^{**}(\vmu) 
   &= \sup_{\text{\vm} , \sigma} ~ \myexpect_{\text{\vparam} \sim q_{\text{\vmu}}} \sqr{ -\frac{1}{2\sigma^2}\|\vparam - \vm\|^2 } - \sqr{ \sup_{\text{\vepsilon}} \, \, \loss(\vm + \vepsilon) - \frac{1}{2\sigma^2} \| \vepsilon \|^2}.
   \label{eq:biconjugate_clean}  
\end{align}
The max-loss here is in a proximal form, which has an equivalence to the trust-region form used in \Cref{eq:SAM} \citep[Sec.~3.4]{PaBo13}.
The first term in the biconjugate arises after simplification of the $\myang{\vlambda', \vmu}$ term in \Cref{eq:biconj_def}, while the second term is obtained by using the theorem below to switch the maximum over $\vmu''$ to a maximum over $\vparam$, and subsequently over $\vepsilon$ by rewriting $\vparam = \vm + \vepsilon$. 
\begin{restatable}{theorem}{exptomax}
  For a Gaussian distribution that takes the form \eqref{eq:exp_family_def}, the conjugate $f^*(\vlambda')$ can be obtained
  by taking the supremum over $\Theta$ instead of $\mathcal{M}$, that is, 
   \begin{equation}
      \sup_{\text{\vmu}'' \in \mathcal{M}} \,\, \myang{\vmu'', \vlambda'} - \myexpect_{\text{\vparam} \sim q_{\text{\vmu}''}}\sqr{ -\loss(\vparam) }
      = \sup_{\text{\vparam} \in \Theta} \,\,   \myang{\vT(\vparam), \vlambda'} - \sqr{ -\loss(\vparam) },
      \label{eq:expect_to_max}
   \end{equation}
   assuming that $\ell(\vparam)$ is lower-bounded, continuous and majorized by a quadratic function. Moreover, there exists $\vlambda'$ for which $f^*(\bl')$ is finite and, assuming $\ell(\bt)$ is coercive, $\dom(f^*) \subseteq \Omega$.
   \label{thm:exptomax}
\end{restatable}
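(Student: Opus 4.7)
The plan is to rewrite both sides of~\eqref{eq:expect_to_max} in a common form and then handle each inequality separately. Using the exponential-family identity $\myexpect_{\vparam \sim q_{\vmu''}}[\myang{\vT(\vparam), \vlambda'}] = \myang{\vmu'', \vlambda'}$, the left-hand side becomes $\sup_{\vmu''\in\mathcal{M}} \myexpect_{\vparam \sim q_{\vmu''}}[h(\vparam)]$, where I set $h(\vparam) := \myang{\vT(\vparam), \vlambda'} + \loss(\vparam)$, while the right-hand side is $\sup_{\vparam\in\Theta} h(\vparam)$. The goal thus collapses to showing that the supremum of $\myexpect_q[h]$ over isotropic-Gaussian $q$ equals the pointwise supremum of $h$.

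The ``$\le$'' direction is immediate from $\myexpect_{q_{\vmu''}}[h] \le \sup_{\vparam} h(\vparam)$, after which taking a sup over $\vmu''$ preserves the inequality. For the ``$\ge$'' direction I would approximate a point mass at an arbitrary $\vparam^\star \in \Theta$ by the sequence $q_n = \gauss(\vparam^\star, (1/n)\mathbf{I})$, whose expectation parameters $\vmu_n$ lie in $\mathcal{M}$. A direct computation gives $\myang{\vmu_n, \vlambda'} \to \myang{\vT(\vparam^\star), \vlambda'}$, and continuity of $\loss$ together with the quadratic majorization and lower bound on $\loss$ supplies an integrable envelope under the (uniformly bounded-variance) Gaussians, so dominated convergence yields $\myexpect_{q_n}[\loss] \to \loss(\vparam^\star)$. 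Thus $\myexpect_{q_n}[h] \to h(\vparam^\star)$, and taking the supremum over $\vparam^\star$ completes the reverse inequality.

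For the two domain statements, I would first exhibit a $\vlambda'$ with $f^*(\vlambda')<\infty$ by picking $\vlambda'$ whose quadratic component is sufficiently negative-definite to dominate the quadratic majorant of $\loss$, which forces $h$ to be bounded above. For the converse inclusion under coercivity, suppose $\vlambda' \notin \Omega$; then the quadratic part of $\myang{\vT(\vparam), \vlambda'}$ fails to be negative-definite, so along some ray $\myang{\vT(\vparam), \vlambda'}$ is non-negative and eventually unbounded, and coercivity of $\loss$ drives $h(\vparam) \to +\infty$ along that ray, giving $f^*(\vlambda') = +\infty$ and hence $\dom(f^*) \subseteq \Omega$.

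The main obstacle I anticipate is the dominated-convergence step: constructing an integrable envelope that is valid uniformly in $n$ requires carefully combining the two-sided control on $\loss$ (the lower bound for the Fatou direction and the quadratic majorization for the Lebesgue direction) and verifying that this envelope remains integrable against every $q_n$ in the approximating sequence. Once this technicality is in place, the remainder of the argument is essentially mechanical and the domain claims follow by direct inspection of the growth of $h$.
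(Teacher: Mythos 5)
Your proposal is correct and follows essentially the same route as the paper's proof: both reduce the claim to showing that the supremum of $\myexpect_{q}[h]$ over Gaussians equals $\sup_{\text{\vparam}} h(\vparam)$, obtain one inequality trivially from $\myexpect_q[h]\le\sup h$, obtain the other by approximating Dirac masses with Gaussians of vanishing covariance (justifying the limit--expectation exchange via continuity and the quadratic sandwich on $\loss$), exhibit a finite $f^*(\vlambda')$ from the quadratic majorant, and derive $\dom(f^*)\subseteq\Omega$ from coercivity along a ray where the quadratic part of $\myang{\vT(\vparam),\vlambda'}$ is non-negative. The only difference is cosmetic: by approximating an arbitrary point $\vparam^\star$ and taking the supremum last, you avoid the paper's explicit case split on whether $J_*$ is finite or infinite.
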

A proof of this theorem is in \Cref{app:asm1}, where we also discuss its validity for generic distributions.

The red surface in \Cref{fig:fig2b} shows the convex lower bound $f^{**}$ to the function $f$ (shown with the blue mesh) for a Gaussian with mean $\omega$ and variance $v$ (\Cref{eq:gaussian_params}). Both functions have a parabolic shape which is due to the shape of $\mathcal{M}$.   

\Cref{fig:fig2c} shows the same but with
respect to the standard mean-variance parameterization $(\omega, v)$. In this parameterization, convexity of $f^{**}$ is lost, but we see more clearly the negative $\loss(\omega)$ appear at $v \to 0$ (shown with the thick black curve). At the limit, we get $f(\vmu) = f^{**}(\vmu) = -\loss(\omega)$, but for higher variances, both $f$ and $f^{**}$ are smoothed versions of $\loss(\omega)$.

\Cref{fig:fig2c} shows three pairs of blue and red lines. The blue line in each pair is a slice of the function~$f$, while the red line is a slice of $f^{**}$.
The values of $v$ for which the functions are plotted are $0.5, 2$, and $4$.
The slices at these variances are visualized in \Cref{fig:fig1b} along with the original loss.

The function $f^{**}(\vmu)$ is a `lifted' version of the one-dimensional $\loss(\theta)$ to a two-dimensional $\vmu$-space. Such liftings have been used in the optimization literature~\citep{BaLa22}. Our work connects them to exponential families and probabilistic inference. The convex relaxation preserves the shape at the limit $v\to 0$, while giving smoothed lower bounds for all $v>0$.
For a high enough variance the
local minima at A and C disappear, biasing the optimization towards flatter region B of the loss. We will now use this observation to connect to SAM.

\subsection{SAM as an optimal relaxation of the Bayes objective}
We will now show that a minimizer of the SAM objective in \Cref{eq:SAM} can be recovered by minimizing a relaxation of the Bayes objective where expected-loss $f$ is replaced by its convex lower bound $f^{**}$. The new objective, which we call relaxed-Bayes, always lower bounds the Bayes objective of \Cref{eq:elbo},
\begin{align}
   &\sup_{\text{\vmu} \in \cM} \, \elbo(q_{\text{\vmu}}) \, \ge \, 
   \sup_{\text{\vmu} \in \cM} \,\, f^{**}(\vmu) - 
   \myKL{q_{\text{\vmu}}(\vparam)}{p(\vparam)} \label{eq:relaxedBayes} \\
   &{= \sup_{\text{\vmu} \in \cM} \crl{ \sup_{\text{\vm}, \sigma} \,\,-\sqr{ \sup_{\text{\vepsilon}} \, \loss(\vm + \vepsilon) - \frac{1}{2\sigma^2} \| \vepsilon \|^2 } - \myKL{ q_{\text{\vmu}} (\bt)}{ \frac{1}{\mathcal{Z}} p(\bt) e^{- \frac{1}{2\sigma^2}\|\text{\vparam} - \text{\vm}\|^2} } + \log \mathcal{Z} } } ,\nonumber
\end{align}
where in the second line we substitute $f^{**}(\vmu)$ from \eqref{eq:biconjugate_clean}, and merge $\myexpect_{\text{\vparam} \sim q_{\text{\vmu}}} [\frac{1}{2\sigma^2}\|\vparam - \vm\|^2 ]$ with the KLD term. $\mathcal{Z}$ denotes the normalizing constant of the second distribution in the KLD. 
The relaxed-Bayes objective has a specific difference-of-convex (DC) structure because both $f^{**}$ and KLD are convex with respect to $\vmu$. Optimizing the relaxed-Bayes objective relates to double-loop algorithms where the inner-loop uses max-loss to optimize for $(\vm,\sigma)$, while the outer loop solves for $\vmu$. We will use the structure to simplify and connect to SAM.

Due to the DC structure, it is possible to switch the order of maximization with respect to $\vmu$ and $(\vm,\sigma)$, and eliminate $\vmu$ by a direct maximization~\citep{To78}. The maximum occurs when the KLD term is 0 and the two distributions in its arguments are equal.
A detailed derivation is in \Cref{app:dcdual}, which gives us a \emph{minimization} problem over $(\vm,\sigma)$
\begin{equation}
   \mathcal{E}_{\text{relaxed}}(\vm,\sigma; \delta') = \sqr{ \sup_{\text{\vepsilon}} \, \loss(\vm + \vepsilon) - \frac{1}{2\sigma^2} \|\vepsilon\|^2 } + { \underbrace{ \frac{\delta'}{2} \|\vm\|^2 - \frac{P}{2} \log ( \sigma^2 \delta' ) }_{= -\log \mathcal{Z} } },
  \label{eq:sambayes}
\end{equation}
where $\delta' = 1/(\sigma^2+1/\delta_0)$ and we use a Gaussian prior $p(\vparam) = \gauss(\vparam \,|\, 0,\vI/\delta_0)$. This objective uses max-loss in the proximal form,
which is similar to the SAM loss in \Cref{eq:SAM} with the difference that the hard-constraint is replaced by a softer penalty term.

The theorem below is our main result which shows that, for a given $\rho$, a minimizer of the SAM objective in \Cref{eq:SAM} can be recovered by optimizing \Cref{eq:sambayes} with respect to $\vm$ while keeping $\sigma$ fixed.
\begin{restatable}{theorem}{equivalence}
   \label{thm:equiv}
   For every $(\rho, \delta)$, there exist $(\sigma, \delta')$ such that 
   \begin{equation}
     \arg\min_{\text{\vtheta} \in \Theta } \,\, \mathcal{E}_{\text{SAM}}(\vparam; \rho, \delta) = \arg\min_{\text{\vm} \in \Theta}  \,\,\mathcal{E}_{\text{relaxed}}(\vm,\sigma; \delta'),
   \end{equation}
   assuming that the SAM-perturbation satisfies $\norm{\be} = \rho$ at a stationary point.
\end{restatable}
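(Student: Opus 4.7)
The plan is to rewrite the SAM inner constrained-max as a proximal (penalized) max via Lagrangian duality, and then identify the resulting objective with $\mathcal{E}_{\text{relaxed}}$. Under the stated assumption that the ball constraint is active at a SAM stationary point ($\norm{\be}=\rho$), strong duality holds for the inner problem and drives the entire argument.

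Concretely, I would proceed in three steps. First, apply Lagrangian duality to the inner maximization,
\[
\sup_{\norm{\be}\leq\rho}\ell(\vparam+\be) \;=\; \inf_{\alpha\geq 0}\Bigl\{\sup_{\be}\,\ell(\vparam+\be)-\tfrac{\alpha}{2}\norm{\be}^2\Bigr\}+\tfrac{\alpha\rho^2}{2},
\]
and reparametrize $\sigma^2=1/\alpha$ so that the penalty matches the one inside $\mathcal{E}_{\text{relaxed}}$; this gives $\mathcal{E}_{\text{SAM}}(\vparam;\rho,\delta) = \inf_{\sigma>0}\{\mathcal{E}_{\text{relaxed}}(\vparam,\sigma;\delta)+\tfrac{P}{2}\log(\sigma^2\delta)+\rho^2/(2\sigma^2)\}$, where the final two terms are constants in $\vparam$. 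Second, swap the two infima (always valid for a joint minimum) and let $\sigma^*$ be the minimizer in $\sigma$; for that fixed $\sigma^*$ the inner $\min_{\vm}$ recovers exactly $\arg\min_{\vm} \mathcal{E}_{\text{relaxed}}(\vm,\sigma^*;\delta)$. Third, conclude the theorem by setting $\sigma=\sigma^*$ and $\delta'=\delta$; the latter is implemented by choosing the prior precision $\delta_0=1/(1/\delta-(\sigma^*)^2)$ via the relation $\delta'=1/(\sigma^2+1/\delta_0)$ from~\Cref{eq:sambayes}. As a sanity check, matching KKT conditions directly gives the same answer: the SAM inner first-order condition $\nabla\ell(\vparam+\be^*)=\alpha\be^*$ and the proximal inner first-order condition $\nabla\ell(\vm+\be^{**})=\be^{**}/\sigma^2$ coincide precisely when $\sigma^2=1/\alpha$ (which is positive thanks to the active-constraint assumption), while the outer envelope-theorem conditions $\nabla\ell+\delta\vparam=0$ and $\nabla\ell+\delta'\vm=0$ agree when $\delta'=\delta$.

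The main obstacle I anticipate is promoting this identification from stationary points to $\arg\min$ \emph{sets} in the nonconvex setting. The duality rewrite cleanly yields one inclusion, $\arg\min_{\vparam}\mathcal{E}_{\text{SAM}}\subseteq \arg\min_{\vm}\mathcal{E}_{\text{relaxed}}(\cdot,\sigma^*;\delta)$; the reverse inclusion requires that the chosen $\sigma^*$ is simultaneously optimal for every relaxed-Bayes minimizer, which holds immediately under uniqueness and otherwise needs all SAM minimizers to share the same Lagrange multiplier $\alpha$ (e.g.\ when $\norm{\vparam^*}$ is constant across the minimizer set). A smaller subtlety is that $\sigma^*$ depends implicitly on the SAM solution via $\alpha$, so the map $(\rho,\delta)\mapsto(\sigma,\delta')$ is existence-only rather than given by a closed form.
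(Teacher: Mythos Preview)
Your ``sanity check'' via matching KKT/first-order conditions is precisely the paper's proof. The paper writes the first-order conditions at a local Nash equilibrium for both min--max problems, obtaining $\be_*=(\rho/\mu)\,\nabla\ell(\vparam_*+\be_*)$ for a multiplier $\mu>0$ (from activeness of the constraint) versus $\be'_*=\sigma^2\,\nabla\ell(\vm_*+\be'_*)$, and then identifies $\sigma^2=\rho/\mu$ and $\delta'=\delta$. This is exactly your KKT match with $\alpha=\mu/\rho$, so the core mechanism coincides. Your explicit remark that $\delta'=\delta$ is realized by choosing $\delta_0=1/(1/\delta-\sigma^2)$ is a nice addition the paper leaves implicit.

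Your headline route via Lagrangian duality, however, has a genuine gap: the displayed equality in your first step is in general only the weak-duality inequality, because $\ell(\vparam+\cdot)$ is not concave. Strong duality for $\sup_{\|\be\|\le\rho}\ell(\vparam+\be)$ would require concavity or the special hidden convexity of the quadratic trust-region subproblem; the active-constraint assumption does not supply it. Consequently your identity $\mathcal{E}_{\text{SAM}}(\vparam;\rho,\delta)=\inf_{\sigma>0}\{\cdots\}$ is only $\le$ in general, and the subsequent swap of infima does not yield the claimed inclusion of minimizers. This is not fatal, because your KKT argument already delivers exactly what the paper proves: a correspondence of \emph{stationary points} (local Nash equilibria), not of argmin sets. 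The obstacle you flag in your last paragraph is real and is not resolved in the paper either; its proof is explicitly at the stationary-point level. I would drop the duality framing, lead with the KKT match, and state the conclusion for stationary points, matching the paper.
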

A proof is given in \Cref{app:trustprox} and essentially follows from the equivalences between proximal and trust-region formulations.
The result establishes SAM as a special case of relaxed-Bayes, and shows that $\rho$ and $\sigma$ play the exactly same role. 
This formalizes the intuitive resemblance discussed at the beginning of \Cref{sec:samisbayes}, and suggests that, in both Bayes and SAM, flatter regions are preferred for higher $\sigma$ because such values give rise to smoother objectives (\Cref{fig:fig1b}).

The result can be useful for combining complementary strengths of SAM and Bayes. For example, uncertainty estimation in SAM can now be performed by estimating $\sigma$ through the relaxed-Bayes objective.~Another way is to exploit the DC structure in $\vmu$-space to devise new optimization procedures to improve SAM. In \Cref{sec:practice}, we will demonstrate both of these improvements of SAM. 

It might also be possible to use SAM to improve Bayes, something we would like to explore in the future. For Bayes, it still is challenging to get a good performance in deep learning while optimizing for both mean and variance (despite some recent success~\citep{OsSw19}). Often, non-Bayesian methods are used to set the variance parameter~\citep{plappert2017parameter, fortunato2017noisy,BiWa22}.~The reasons behind the bad performance remain unclear. It is possible that a simplistic posterior is to blame~\citep{FoBu20,FoGa22,CoBr22}, but
our result suggests to not disregard the opposite hypothesis \citep{farquhar2020liberty}.
SAM works well, despite using a simple posterior, and it is possible that the poor performance of Bayes is due to poor algorithms, not simplistic posteriors. We believe that our result can be used to borrow algorithmic techniques from SAM to improve Bayes, and we hope to explore this in the future. 

\begin{algorithm}[t!]
   \caption{Our \bsambox{Bayesian-SAM (bSAM)} is a simple modification of \sambox{SAM with Adam}. Just add the red boxes, or use them to replace the blue boxes next to them. The advantage of bSAM is that it automatically obtains variances $\vsigma^2$ through the vector $\vs$ that adapts the learning rate (see line 11). Two other main changes are in line 3 where Gaussian sampling is added, and line 5 where the vector $\vs$ is used to adjust the perturbation $\vepsilon$. The
   change in line 8 improves the variance by adding the regularizer $\delta$ and a constant $\gamma$ (just like Adam), while line 9 uses a Newton-like update where $\vs$ is used instead of $\sqrt{\vs}$. We denote by $\va\cdot\vb$, $\va/\vb$ and $|\va|$, the element-wise multiplication, division and absolute value, respectively. $N$ denotes the number of training examples.}
      \label{fig:bSAM}
      \definecolor{commentgray}{rgb}{0.25, 0.25, 0.25}
      \begin{algorithmic}[1]
          \REQUIRE Learning rate $\alpha,\beta_1$, 
          $L_2$-regularizer $\delta$, SAM parameter $\rho$, $\beta_2=0.999$, $\gamma = \sambox{$10^{-8}$} \, \bsambox{$0.1$}$
          \STATE Initialize mean $\vomega \leftarrow \text{(NN weight init)}$, scale $\vs \leftarrow \sambox{$\mathbf{0}$} \,  \bsambox{$\mathbf{1}$}$, and momentum vector $\vg_m \leftarrow \mathbf{0}$
          \WHILE{not converged}
          \STATE $\vparam \leftarrow \vomega $\bsambox{$+\,\ve$, where $\ve \sim \gauss(\ve \, | \, 0,\vsigma^2),\,\,\,  \vsigma^2 \leftarrow \vone/ (N\cdot\vs)$ }  \hfill\COMMENT{{\scriptsize \tt{{\color{commentgray} Sample to improve the variance}}}}
          \STATE $\vg \leftarrow \frac{1}{B} \sum_{i\in \minibatch} \nabla \loss_i(\vparam) $ where $\minibatch$ is minibatch of $B$ examples
          \STATE $\vepsilon \leftarrow \rho \sambox{$\frac{\vg}{\|\vg\|}$} \,\,\, \bsambox{$\frac{\vg}{\vs}$} $ 
          \hfill\COMMENT{{\scriptsize \tt{{\color{commentgray} Rescale by a vector $\vs$, instead of a scalar $\|\vg\|$ used in SAM}}}}
          \STATE $\vg_{\epsilon} \leftarrow \frac{1}{B} \sum_{i\in \minibatch} \nabla \loss_i(\vomega + \vepsilon) $
          \STATE $\vg_m \leftarrow \beta_1 \vg_m + (1 - \beta_1) \rnd{\vg_{\epsilon} + \delta\vomega}$ 
          \STATE $\vs \leftarrow \beta_2 \, \vs + \, (1-\beta_2) \sambox{$(\vg_{\epsilon} + \delta \vomega)^2$} \,\,\, \bsambox{$\rnd{\sqrt{\vs} \cdot|\vg| + \delta+ \gamma}$} $ \hfill\COMMENT{{\scriptsize \tt{{\color{commentgray} Improved variance estimate}}}}
          \STATE $\boldsymbol{\omega} \leftarrow \boldsymbol{\omega} - \alpha \,\, \sambox{$\frac{\vg_m}{\sqrt{\vs} +  \gamma}$} \,\,\, \bsambox{$\frac{\vg_m}{\vs}$} $
               \hfill\COMMENT{{\scriptsize \tt{{\color{commentgray} Scale the gradient by $\vs$ instead of $\sqrt{\vs}$}}}}
          \ENDWHILE
          \STATE Return the mean $\vm$\bsambox{and variance $\vsigma^2\leftarrow \vone/ (N\cdot\vs)$} 
     \end{algorithmic}
\end{algorithm}

\section{Bayesian-SAM to Obtain Uncertainty Estimates for SAM}
\label{sec:practice}

In this section, we will show an application of our result to improve an aspect of SAM. We will derive a new algorithm that can automatically obtain reasonable uncertainty estimates for SAM. We will do so by optimizing the relaxed-Bayes objective to get $\sigma$. A straightforward way is to directly optimize \cref{eq:sambayes}, but we will use the approach followed by \citet{KhNi18}, based on the natural-gradient method of \citet{KhRu21}, called the Bayesian learning rule (BLR). This approach has
been shown to work well on large problems, and is more promising because it leads to Adam-like algorithms, enabling us to leverage existing deep-learning techniques to improve training~\citep{OsSw19}, for example, by using momentum and data augmentation.  

To learn variances on top of SAM, we will use a multivariate Gaussian posterior $q_{\text{\vmu}}(\vparam) = \gauss(\vparam\,|\,\vomega, \vV)$ with mean $\vm$ and a diagonal covariance $\vV = \diag(\vs)^{-1}$, where $\vs$ is a vector of precision values (inverse of the variances). The posterior is more expressive than the isotropic Gaussian used in the previous section, and is expected to give better uncertainty estimates. As before, we will use a Gaussian prior $p(\bt) =
\cN(\vparam \,| \, 0, \mathbf{I}/\delta_0)$.

The BLR uses gradients with respect to $\vmu$ to update $\vlambda$. The pair $(\vmu,\vlambda)$ is defined similarly to \Cref{eq:gaussian_params}. We will use the update given in \citet[Eq. 6]{KhRu21} which when applied to \Cref{eq:relaxedBayes} gives us, 
\begin{equation}
   \bl \leftarrow (1 - \alpha) \bl + \alpha \sqr{ \nabla f^{**}(\vmu) +  \vlambda_0 },
  \label{eq:blr}
\end{equation}
where $\alpha>0$ is the learning rate and $\vlambda_0 = \rnd{ 0, - \delta_0\vI/2 }$ is the natural parameter of $p(\vparam)$. This is obtained by using the fact that $\nabla_{\text{\vmu}} \myKL{q_{\text{\vmu}}(\vparam)}{p(\vparam)} = \vlambda - \vlambda_0$; see \citet[Eq. 23]{KhRu21}.
For $\alpha = 1$, the update is equivalent to the difference-of-convex (DC) algorithm~\citep{Ta97}. The BLR generalizes this procedure by using $\alpha<1$, and automatically exploits the DC structure in $\vmu$-space, unlike the naive direct minimization of~\eqref{eq:sambayes} with gradient descent.

The update~\eqref{eq:blr} requires $\nabla f^{**}$, which is obtained by maximizing a variant of \Cref{eq:biconjugate_clean}, that replaces the isotropic covariance by a diagonal one. For a scalable method, we solve them inexactly  by using the 'local-linearization' approximation used in SAM, along with an additional approximation to do single block-coordinate descent step to optimize for the variances. The rest of the derivations is similar to the previous applications of the BLR; see
\cite{KhNi18,OsSw19}. 
The details are in \Cref{app:bSAM}, and the resulting algorithm, which we call Bayesian-SAM is shown in \Cref{fig:bSAM}.

bSAM differs slightly from a direct application of Adam~\citep{KiBa14} on the SAM objective, but uses the same set of hyperparameters. The differences are highlighted in \Cref{fig:bSAM}, where bSAM is obtained by replacing the blue boxes with the red boxes next to them. For SAM with Adam, we use the method discussed in \citet{BaMo22,KwKi21} for language models. The main change is in line 5, where the perturbation $\vepsilon$ is adapted by using a \emph{vector} $\vs$, as
opposed to SAM which uses a \emph{scalar} $\|\vg\|$. Modifications in line 3 and 8 are aimed to improve the variance estimates because with just Adam we may not get good variances; see the discussion in \citet[Sec 3.4]{KhNi18}. Finally, the modification in line 9 is due to the Newton-like updates arising from BLR~\citep{KhRu21} where $\vs$ is used instead of $\sqrt{\vs}$ to scale the gradient as explained in \Cref{app:bSAM}.

Due to their overall structural similarity, the bSAM algorithm has a similar computational
  complexity to Adam with SAM. The only additional overhead compared to
  SAM-type methods is the computation of a random noise perturbation which is
 negligible compared to gradient computations. 

\section{Numerical Experiments}
\label{sec:numerics}
In this section, we present numerical results \footnote{Code available: \url{https://github.com/team-approx-bayes/bayesian-sam}.}  and show that bSAM brings the best of the two worlds together. It gives an improved uncertainty estimate similarly to the best Bayesian approaches, but also improves test accuracy, just like SAM. 
We compare performance to many methods from the deep learning (DL) and Bayesian DL literature: SGD, Adam~\citep{KiBa14}, SWAG~\citep{MaIz19}, and VOGN~\citep{OsSw19}. We also compare with two SAM variants:  SAM-SGD and SAM-Adam. Both are obtained by inserting the perturbed gradients into either SGD or Adam, as suggested in~\citet{FoKl21,BaMo22,KwKi21}. We compare these methods across three different neural network architectures and on five datasets of increasing complexity.

The comparison is carried out with respect to four different metrics evaluated on the validation set. The metrics are test accuracy, the negative log-likelihood (NLL), expected calibration error (ECE)~\citep{guo2017calibration} (using $20$ bins) and area-under the ROC curves (AUROC). For an explanation of these metrics, we refer the reader to~\citet[Appendix~H]{OsSw19}.
For Bayesian methods, we report the performance of the predictive distribution ${\widehat p}(y \,|\, \cD) = \int p(y \, | \, \cD, \bt) \, q(\bt) \dd \bt$, which we approximate using an average over $32$ models drawn from $q(\bt)$.

\begin{figure}[t]
  \centering
    \subfigure[Exact Bayes]{\includegraphics[height=2cm]{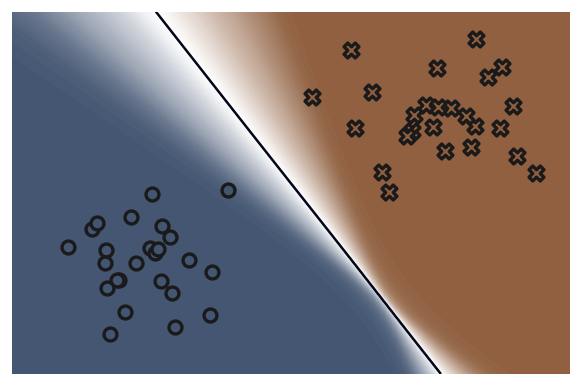} \label{fig:exp_a}}
    \subfigure[bSAM]{\includegraphics[height=2cm]{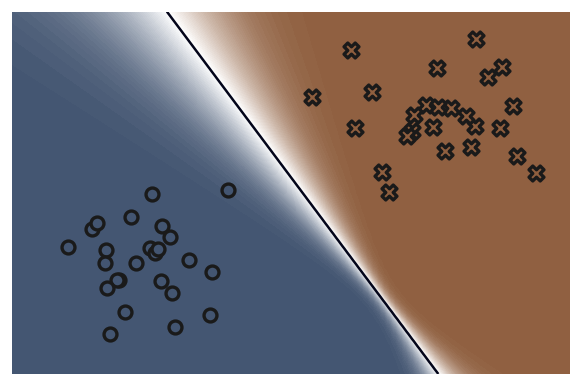} \label{fig:exp_b}}
    \subfigure[SAM (point est.)]{\includegraphics[height=2cm]{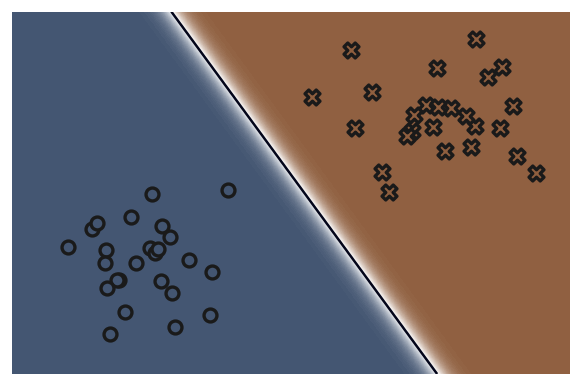} \label{fig:exp_c}}
    \subfigure[SAM (Laplace)]{\includegraphics[height=2cm]{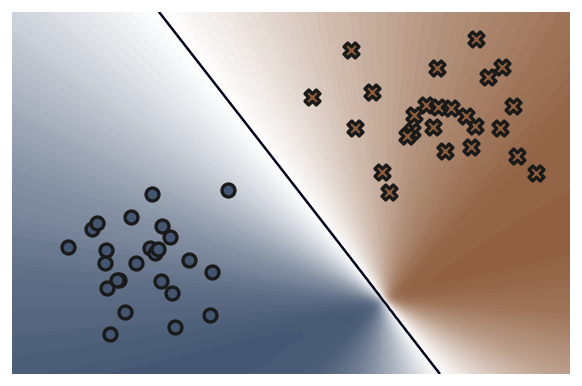} \label{fig:exp_d}}
    \includegraphics[height=2cm]{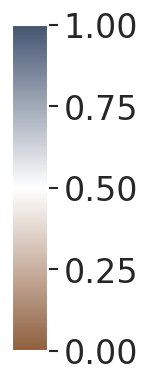} 
    \caption{For a 2D logistic regression, bSAM gives predictive uncertainties that are similar to those obtained by an \emph{exact} Bayesian posterior. White areas show uncertain predictive probabilities around $0.5$. SAM's point estimate gives overconfident predictions, while Laplace leads to underconfidence.}
    \label{fig:logreg}
\end{figure}

\begin{table}[t!]
	\centering
	\footnotesize
	\setlength{\tabcolsep}{4pt}
	\begin{tabular}{c l r r r r}
		\toprule
		\begin{tabular}{c}
		  Model /  \\
                  Data  \\
		\end{tabular} &
		Method & 
		\begin{tabular}{c}
                  Accuracy \textuparrow \\
                 {\scriptsize (higher is better)}
		\end{tabular} &
		\begin{tabular}{c}
			NLL \textdownarrow \\
                 {\scriptsize (lower is better)}
		\end{tabular} &
		\begin{tabular}{c}
                ECE \textdownarrow \\
                 {\scriptsize (lower is better)}
		\end{tabular} 
                &
		\begin{tabular}{c}
                AUROC \textuparrow \\
                 {\scriptsize (higher is better)}
		\end{tabular}
          \\
                \midrule
		\multirow{3}{*}{
		  \begin{tabular}{c}
                    \\ \\ \\
                    ResNet-20-FRN  \\
		    CIFAR-10 
		\end{tabular}}   
		      & SGD        & $86.55_{(0.35)}$ & $0.56_{(0.014)}$ & $0.0839_{(0.003)}$ & $0.878_{(0.005)}$   \\
		      & SAM-SGD    & $87.49_{(0.26)}$ & $0.49_{(0.019)}$ & $0.0710_{(0.003)}$ & $0.891_{(0.004)}$   \\
                      & SWAG       & $86.80_{(0.10)}$ & $0.53_{(0.017)}$ & $0.0774_{(0.001)}$ & $0.880_{(0.006)}$  \\
                      & VOGN       & $87.30_{(0.24)}$ & $0.38_{(0.004)}$ & $0.0315_{(0.003)}$ & $0.890_{(0.003)}$   \\
 		      & Adam       & $80.85_{(0.98)}$ & $0.83_{(0.063)}$ & $0.1317_{(0.011)}$ & $0.820_{(0.013)}$  \\
                      & SAM-Adam   & $85.26_{(0.15)}$ & $0.46_{(0.007)}$ & $0.0228_{(0.002)}$ & $0.874_{(0.004)}$   \\
\rowcolor{tablecolor} & bSAM (ours)& $\mathbf{88.72}_{(0.24)}$ & $\mathbf{0.34}_{(0.005)}$ & $\mathbf{0.0163}_{(0.002)}$ & $\mathbf{0.903}_{(0.003)}$  \\ 
\midrule
		\multirow{3}{*}{
		  \begin{tabular}{c}
                    \\ \\ \\
                    ResNet-20-FRN  \\
		    CIFAR-100 \\ 
		\end{tabular}}   
		      & SGD        & $55.82_{(0.97)}$ & $1.91_{(0.025)}$ & $0.1695_{(0.005)}$ & $0.811_{(0.004)}$  \\ 
		      & SAM-SGD    & $58.58_{(0.59)}$ & $1.60_{(0.022)}$ & $0.0989_{(0.005)}$ & $0.827_{(0.003)}$ \\
                      & SWAG       & $56.53_{(0.40)}$ & $1.86_{(0.018)}$ & $0.1604_{(0.004)}$ & $0.814_{(0.004)}$  \\
                      & VOGN       & $59.83_{(0.75)}$ & $1.44_{(0.019)}$ & $0.0756_{(0.005)}$ & $0.830_{(0.002)}$   \\
	      	      & Adam       & $39.73_{(0.97)}$ & $2.29_{(0.045)}$ & $\mathbf{0.0295}_{(0.018)}$ & $0.775_{(0.004)}$  \\
                      & SAM-Adam   & $53.25_{(0.80)}$ & $1.71_{(0.035)}$ & $0.0401_{(0.005)}$ & $0.818_{(0.005)}$   \\
\rowcolor{tablecolor} & bSAM (ours)& $\mathbf{62.64}_{(0.33)}$ & $\mathbf{1.32}_{(0.001)}$ & $\mathbf{0.0311}_{(0.003)}$ & $\mathbf{0.841}_{(0.004)}$   \\ 
		\bottomrule\\
	\end{tabular}
  \caption{Comparison without data augmentation. The results are averaged over 5 random seeds, with standard deviation shown in the brackets. The best statistical significant performance is shown in bold. bSAM (gray row) gives either comparable or better performance than the rest. Small scale experiments on MNIST are in~\Cref{table:mnist} in~\Cref{app:mnist}.}
	\label{table:improvesam}
      \end{table}

\begin{table}[t!]
	\centering
	\footnotesize
	\setlength{\tabcolsep}{4pt}
	\begin{tabular}{c l r r r r}
		\toprule
		\begin{tabular}{c}
		  Model / \\
                  Dataset  \\
		\end{tabular} &
		Method & 
		\begin{tabular}{c}
                  Accuracy \textuparrow \\
                 {\scriptsize (higher is better)}
		\end{tabular} &
		\begin{tabular}{c}
			NLL \textdownarrow \\
                 {\scriptsize (lower is better)}
		\end{tabular} &
		\begin{tabular}{c}
                ECE \textdownarrow \\
                 {\scriptsize (lower is better)}
		\end{tabular} 
                &
		\begin{tabular}{c}
                AUROC \textuparrow \\
                 {\scriptsize (higher is better)}
		\end{tabular}\\
                \midrule
		\multirow{3}{*}{
		  \begin{tabular}{c}
                    \\ 
                    ResNet-20-FRN / \\
                    CIFAR-10
		\end{tabular}}   
		& SGD                           & $91.68_{(0.26)}$ & $0.29_{(0.008)}$ & $0.0397_{(0.002)}$ & $0.915_{(0.002)}$  \\
                & SAM-SGD         & $\mathbf{92.29}_{(0.39)}$ & $0.25_{(0.004)}$ & $0.0266_{(0.003)}$ & $0.920_{(0.003)}$   \\
		& Adam            & $89.97_{(0.27)}$ & $0.41_{(0.021)}$ & $0.0610_{(0.003)}$ & $0.900_{(0.003)}$  \\
                & SAM-Adam & $91.57_{(0.21)}$ & $0.26_{(0.004)}$ & $0.0329_{(0.002)}$ & $0.918_{(0.001)}$   \\
\rowcolor{tablecolor} & bSAM (ours)             & $\mathbf{92.16}_{(0.16)}$ & $\mathbf{0.23}_{(0.003)}$ & $\mathbf{0.0057}_{(0.002)}$ & $\mathbf{0.925}_{(0.001)}$ \\
                \midrule
		\multirow{3}{*}{
		  \begin{tabular}{c}
                    \\ 
                    ResNet-20-FRN / \\
                    CIFAR-100
		\end{tabular}}   
		& SGD                          & $66.48_{(0.10)}$ & $1.20_{(0.007)}$ & $0.0524_{(0.004)}$ & $0.846_{(0.002)}$  \\
                & SAM-SGD        & $67.27_{(0.22)}$ & $1.19_{(0.011)}$ & $0.0481_{(0.001)}$ & $0.848_{(0.002)}$   \\
		& Adam           & $61.76_{(0.67)}$ & $1.66_{(0.049)}$ & $0.1582_{(0.006)}$ & $0.826_{(0.003)}$  \\
 & SAM-Adam & $65.34_{(0.32)}$ & $1.23_{(0.012)}$ & $\mathbf{0.0166}_{(0.003)}$ & $0.847_{(0.002)}$   \\
\rowcolor{tablecolor} & bSAM (ours)            & $\mathbf{68.22}_{(0.44)}$ & $\mathbf{1.10}_{(0.013)}$ & $0.0258_{(0.003)}$ & $\mathbf{0.857}_{(0.004)}$ \\
                \midrule
		\multirow{3}{*}{
		  \begin{tabular}{c}
                    \\ 
                    ResNet-20-FRN / \\
                    TinyImageNet
		\end{tabular}}   
		& SGD                          & $52.01_{(0.36)}$ & $1.98_{(0.007)}$ & $0.0330_{(0.002)}$ & $\mathbf{0.832}_{(0.002)}$  \\
                & SAM-SGD        & $52.25_{(0.26)}$ & $1.97_{(0.013)}$ & $\mathbf{0.0155}_{(0.002)}$ & $0.827_{(0.005)}$   \\
		& Adam           & $49.04_{(0.38)}$ & $2.14_{(0.024)}$ & $0.0502_{(0.004)}$ & $0.820_{(0.004)}$  \\
 & SAM-Adam & $51.17_{(0.45)}$ & $2.02_{(0.014)}$ & $0.0460_{(0.004)}$ & $0.828_{(0.004)}$   \\
          \rowcolor{tablecolor}  & bSAM (ours)            & $\mathbf{52.90}_{(0.35)}$ & $\mathbf{1.94}_{(0.009)}$ & $\mathbf{0.0199}_{(0.003)}$ & $\mathbf{0.831}_{(0.001)}$ \\
                \midrule
		\multirow{2}{*}{
		  \begin{tabular}{c}
                    ResNet-18 / \\
                    CIFAR-10
		\end{tabular}}   
		& SGD                           & $94.76_{(0.11)}$ & $0.21_{(0.006)}$ & $0.0304_{(0.001)}$ & $0.926_{(0.006)}$  \\
                & SAM-SGD         & $95.72_{(0.14)}$ & $0.14_{(0.004)}$ & $0.0134_{(0.001)}$ & $0.949_{(0.003)}$   \\
            \rowcolor{tablecolor}     & bSAM (ours)             & $\mathbf{96.15}_{(0.08)}$ & $\mathbf{0.12}_{(0.002)}$ & $\mathbf{0.0049}_{(0.001)}$ & $\mathbf{0.954}_{(0.001)}$ \\
                \midrule
		\multirow{2}{*}{
		  \begin{tabular}{c}
                    ResNet-18 / \\
                    CIFAR-100
		\end{tabular}}   
		& SGD                          & $76.54_{(0.26)}$ & $0.98_{(0.007)}$ & $0.0501_{(0.002)}$ & $0.869_{(0.003)}$  \\
                & SAM-SGD        & $78.74_{(0.19)}$ & $0.79_{(0.007)}$ & $0.0445_{(0.002)}$ & $0.887_{(0.003)}$   \\
             \rowcolor{tablecolor}    & bSAM (ours)            & $\mathbf{80.22}_{(0.28)}$ & $\mathbf{0.70}_{(0.008)}$ & $\mathbf{0.0311}_{(0.003)}$ & $\mathbf{0.892}_{(0.003)}$ \\
                \bottomrule \\
	\end{tabular}
   \caption{Comparison with data augmentation.
       Similar to \Cref{table:improvesam}, the shaded row show that bSAM consistently improves over the baselines and is the overall best method.}
	\label{table:dataaugmentation}
      \end{table}

\subsection{Illustration on a Toy Example}
\label{sec:toy}
\Cref{fig:logreg} compares the predictive probability on a simple Bayesian logistic regression problem~\citep[Ch.~8.4]{Mu12}. For exact Bayes, we use numerical integration, which is feasible for this toy 2D problem. For the rest, we use diagonal Gaussians. White areas indicate probabilities around $0.5$, while red and blue are closer to $0$ and $1$ respectively.
We see that the bSAM result is comparable to the exact posterior and corrects the overconfident predictions of SAM. Performing a Laplace approximation around the SAM solution leads to underconfident predictions.
The posteriors are visualized in~\Cref{fig:logreg_c} and other details are discussed in~\Cref{app:logreg}. We show results on another small toy example ('two moons') in~\Cref{sec:moons}.

We can also use this toy problem to see the effect of using $f$ vs $f^{**}$ in the Bayesian objective. \Cref{fig:tightbound} in \Cref{app:lowerbound} shows a comparison of optimizing~\Cref{eq:relaxedBayes} to an optimization of the original Bayesian objective~\Cref{eq:elbo}. For both, we use full covariance to avoid the inaccuracies arising due to a diagonal covariance assumption. We find that the posteriors are very similar for both $f$ and $f^{**}$, indicating that the gap introduced by the relaxation is also small.

\subsection{Real Datasets}
We first perform a set of experiments without any data augmentation, which helps us to quantify the improvements obtained by the regularization induced by our Bayesian approach. This is useful in applications where it is not easy to do data-augmentation (for example, tabular or time series data).  All further details about hyperparameter settings and experiments are in~\Cref{app:hyperparameters}.  The results are shown in \Cref{table:improvesam}, and the proposed bSAM method overall performs best with respect to test accuracy as well as uncertainty metrics. bSAM also works well when training smaller networks on MNIST-like datasets, see~\Cref{table:mnist} in~\Cref{app:mnist}.

In \Cref{table:dataaugmentation} we show
results with data augmentations (random horizontal flipping and random cropping).
We consider CIFAR-10, CIFAR-100 and TinyImageNet, and still find bSAM to 
perform favorably, although the margin of improvement is smaller with data augmentation.
We note that, for SGD on CIFAR-10 in \Cref{table:dataaugmentation}, the accuracy (91.68\% accuracy) is slightly better than the 91.25\% reported in the original ResNet-paper~\citep[Table~6]{HeZh16}, by
using a similar ResNet-20 architecture with filter response normalization (FRN), which has $0.27$ million parameters.

bSAM not only improves uncertainty estimates and test accuracy, it also reduces the sensitivity of SAM to the hyperparameter $\rho$. This is shown in~\Cref{fig:sensitivity_preview} for SAM-SGD, SAM-Adam and bSAM, where bSAM shows the least sensitivity to the choice of $\rho$. Plots for other metrics are in~\Cref{app:sensitivity-to-rho}. 

To understand the reasons behind the gains obtained with bSAM, we do several ablation studies. First, we study the effect of Gaussian sampling in line 3~in \Cref{fig:bSAM}, which is not used in SAM. Experiments in \Cref{app:noisylinearize} show that Gaussian sampling consistently improves the test accuracy and uncertainty metrics. Second, in \Cref{app:marginal}, we study the effect of using posterior averaging over the quality of bSAM's predictions, and show that increasing the number of
sampled models consistently improves the performance.
Finally, we study the effect of ``$m$-sharpness'', where we distribute the mini-batch across several compute nodes and use a different perturbation for each node ($m$ refers to the number of splits). This has been shown to improve the performance of both SAM~\citep{FoKl21} and Bayesian methods~\citep{OsSw19}. In~\Cref{app:m-sharpness}, we show that larger values of $m$ lead to better performance for bSAM too. In our experiments, we used $m=8$. For completeness, we show how $m$-sharpness is implemented in bSAM in~\Cref{app:msharpalg}.

\begin{figure}[t!]
    \centering
  \begin{tabular}{cc}
    \centering
  \includegraphics[width=0.45\linewidth]{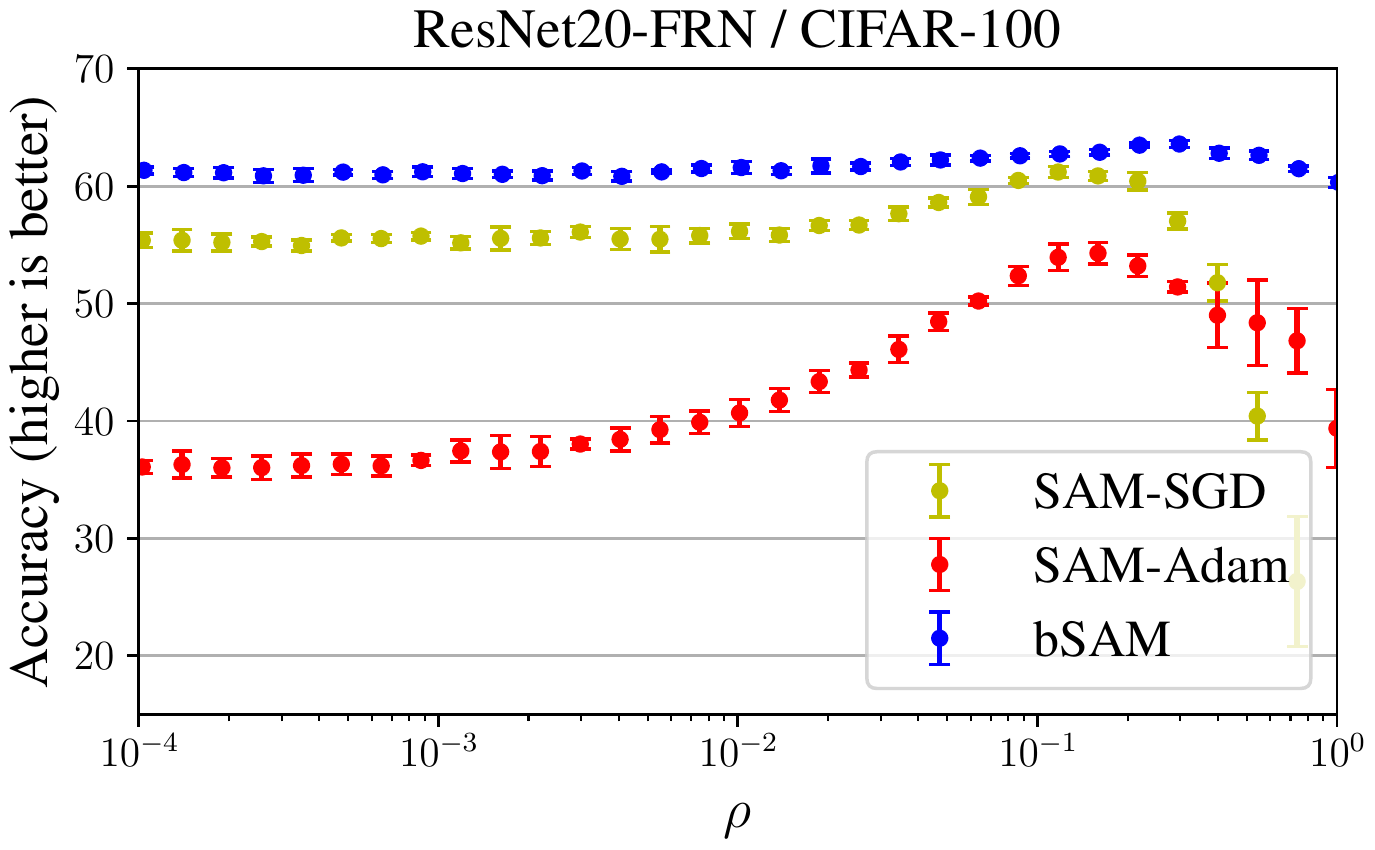}
  &\includegraphics[width=0.45\linewidth]{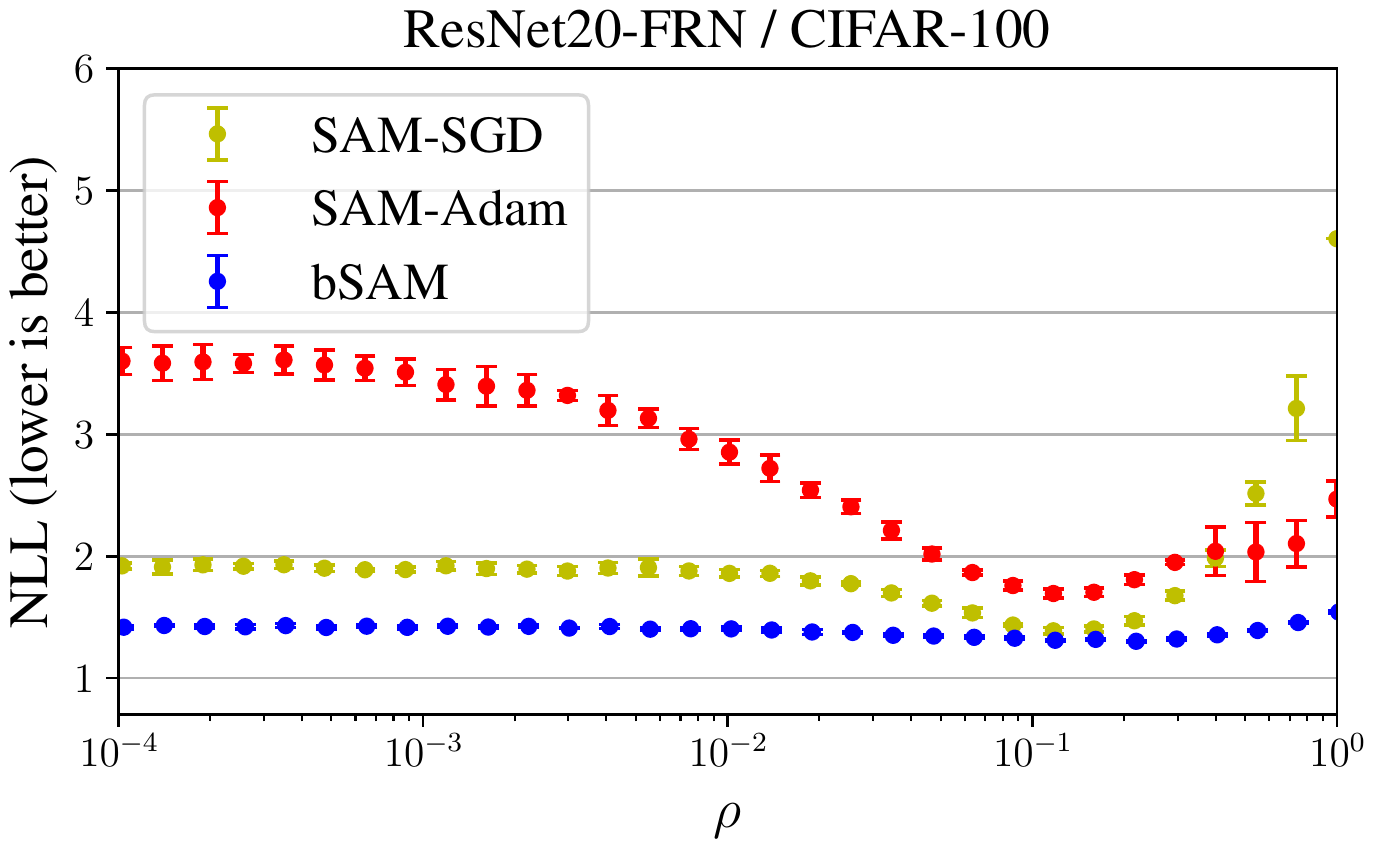}\\
\end{tabular}
\caption{bSAM is less sensitive to the choice of $\rho$ than both SAM-SGD and SAM-Adam. We show accuracy and NLL. Plots for the other metrics are in~\Cref{app:sensitivity-to-rho}.}
\label{fig:sensitivity_preview}
\end{figure}

\section{Discussion}
\label{sec:discuss}
In this paper, we show a connection between SAM and Bayes through a Fenchel biconjugate. When the expected negative loss in the Bayes objective is replaced by the biconjugate, we obtain a relaxed Bayesian objective, which involves a maximum loss, similar to SAM. We showed that SAM can be seen as optimizing the mean while keeping variance fixed. We then derived an extension where the variances are optimized using a Bayesian learning algorithm applied to the relaxation. The numerical
results show that the new variant improves both uncertainty estimates and generalization, as expected of a method that combines the strengths of Bayes and SAM.

We expect this result to be useful for researchers interested in designing new robust methods. The principles used in SAM are related to adversarial robustness which is different from Bayes. We show that the Fenchel conjugate is the bridge between the two, and we used it to obtain an extension of SAM. Since SAM gives an excellent performance on many deep-learning benchmark, the techniques used there will be useful in improving Bayesian methods. Finally, the use of convex duality
can be useful to improve theoretical understanding of SAM-like methods. It is possible to use the results from convex-optimization literature to study and improve adversarial as well as Bayesian methods. Our work may open such directions and provide a new path for research in improving robustness.
Expectations with respect to distributions arise naturally in many other problems, and often sampling is used to approximate them. Our paper gives a different way to approximate such expectations using convex optimization rather than sampling. Such problems may also benefit from our results.

\subsubsection*{Acknowledgements} 
We would like to thank Pierre Alquier for his valuable comments and suggestions. This work is supported by the Bayes duality project, JST CREST Grant Number JPMJCR2112.

\subsubsection*{Author Contributions Statement}
List of Authors: Thomas M\"ollenhoff (T.M.), Mohammad Emtiyaz Khan (M.E.K.).

T.M. proposed the Fenchel biconjugates, their connections to SAM  (\Cref{thm:exptomax,thm:equiv}), and derived the new bSAM algorithm. M.E.K. provided feedback to simplify these. T.M. conducted all the experiments with suggestions from M.E.K. Both authors wrote the paper together.

\bibliography{references}

\begin{thebibliography}{52}
\providecommand{\natexlab}[1]{#1}
\providecommand{\url}[1]{\texttt{#1}}
\expandafter\ifx\csname urlstyle\endcsname\relax
  \providecommand{\doi}[1]{doi: #1}\else
  \providecommand{\doi}{doi: \begingroup \urlstyle{rm}\Url}\fi

\bibitem[Alquier(2021)]{Al21}
Pierre Alquier.
\newblock User-friendly introduction to {PAC}-{B}ayes bounds.
\newblock \emph{arXiv:2110.11216}, 2021.

\bibitem[Bahri et~al.(2022)Bahri, Mobahi, and Tay]{BaMo22}
Dara Bahri, Hossein Mobahi, and Yi~Tay.
\newblock Sharpness-aware minimization improves language model generalization.
\newblock In \emph{Association for Computational Linguistics {(ACL)}}, 2022.

\bibitem[Bauermeister et~al.(2022)Bauermeister, Laude, M{\"o}llenhoff, Moeller,
  and Cremers]{BaLa22}
Hartmut Bauermeister, Emanuel Laude, Thomas M{\"o}llenhoff, Michael Moeller,
  and Daniel Cremers.
\newblock Lifting the convex conjugate in {L}agrangian relaxations: A tractable
  approach for continuous {M}arkov random fields.
\newblock \emph{SIAM J. Imaging Sci.}, 15\penalty0 (3):\penalty0 1253--1281,
  2022.

\bibitem[Bisla et~al.(2022)Bisla, Wang, and Choromanska]{BiWa22}
Devansh Bisla, Jing Wang, and Anna Choromanska.
\newblock Low-pass filtering {SGD} for recovering flat optima in the deep
  learning optimization landscape.
\newblock \emph{International Conference on Artificial Intelligence and
  Statistics (AISTATS)}, 2022.

\bibitem[Catoni(2007)]{Ca07}
Olivier Catoni.
\newblock \emph{{PAC}-{B}ayesian Supervised Classification}.
\newblock Number~56. Institute of Mathematical Statistics Lecture Notes --
  Monograph Series, 2007.

\bibitem[Chen et~al.(2022)Chen, Hsieh, and Gong]{ChHs21}
Xiangning Chen, Cho-Jui Hsieh, and Boqing Gong.
\newblock When vision transformers outperform {R}es{N}ets without pretraining
  or strong data augmentations.
\newblock In \emph{International Conference on Learning Representations
  (ICLR)}, 2022.

\bibitem[Coker et~al.(2022)Coker, Bruinsma, Burt, Pan, and Doshi-Velez]{CoBr22}
Beau Coker, Wessel~P Bruinsma, David~R Burt, Weiwei Pan, and Finale
  Doshi-Velez.
\newblock Wide mean-field {B}ayesian neural networks ignore the data.
\newblock In \emph{International Conference on Artificial Intelligence and
  Statistics (AISTATS)}, 2022.

\bibitem[Csiszár \& Matú{\v{s}}(2005)Csiszár and Matú{\v{s}}]{CsMa05}
Imre Csiszár and Franti{\v{s}}ek Matú{\v{s}}.
\newblock Closures of exponential families.
\newblock \emph{Annals of probability}, pp.\  582--600, 2005.

\bibitem[Diamond \& Boyd(2016)Diamond and Boyd]{DiBo16}
Steven Diamond and Stephen Boyd.
\newblock {CVXPY}: A {P}ython-embedded modeling language for convex
  optimization.
\newblock \emph{J. Mach. Learn. Res. (JMLR)}, 17\penalty0 (1):\penalty0
  2909--2913, 2016.

\bibitem[Dziugaite \& Roy(2017)Dziugaite and Roy]{DzRo17}
Gintare~Karolina Dziugaite and Daniel~M Roy.
\newblock Computing nonvacuous generalization bounds for deep (stochastic)
  neural networks with many more parameters than training data.
\newblock In \emph{Conference on Uncertainty in Artificial Intelligence (UAI)},
  2017.

\bibitem[Dziugaite \& Roy(2018)Dziugaite and Roy]{DzRo18}
Gintare~Karolina Dziugaite and Daniel~M Roy.
\newblock Entropy-{SGD} optimizes the prior of a {PAC}-{B}ayes bound:
  {G}eneralization properties of {E}ntropy-{SGD} and data-dependent priors.
\newblock In \emph{International Conference on Machine Learning (ICML)}, 2018.

\bibitem[Farquhar et~al.(2020)Farquhar, Smith, and Gal]{farquhar2020liberty}
Sebastian Farquhar, Lewis Smith, and Yarin Gal.
\newblock Liberty or depth: Deep bayesian neural nets do not need complex
  weight posterior approximations.
\newblock \emph{Advances in Neural Information Processing Systems},
  33:\penalty0 4346--4357, 2020.

\bibitem[Foong et~al.(2020)Foong, Burt, Li, and Turner]{FoBu20}
Andrew Foong, David Burt, Yingzhen Li, and Richard Turner.
\newblock On the expressiveness of approximate inference in {B}ayesian neural
  networks.
\newblock \emph{Advances in Neural Information Processing Systems}, 33, 2020.

\bibitem[Foret et~al.(2021)Foret, Kleiner, Mobahi, and Neyshabur]{FoKl21}
Pierre Foret, Ariel Kleiner, Hossein Mobahi, and Behnam Neyshabur.
\newblock Sharpness-aware minimization for efficiently improving
  generalization.
\newblock In \emph{International Conference on Learning Representations
  (ICLR)}, 2021.

\bibitem[Fortuin et~al.(2022)Fortuin, Garriga-Alonso, Wenzel, Rätsch, Turner,
  van~der Wilk, and Aitchison]{FoGa22}
Vincent Fortuin, Adrià Garriga-Alonso, Florian Wenzel, Gunnar Rätsch, Richard
  Turner, Mark van~der Wilk, and Laurence Aitchison.
\newblock Bayesian neural network priors revisited.
\newblock In \emph{International Conference on Learning Representations
  (ICLR)}, 2022.

\bibitem[Fortunato et~al.(2018)Fortunato, Azar, Piot, Menick, Osband, Graves,
  Mnih, Munos, Hassabis, Pietquin, Blundell, and Legg]{fortunato2017noisy}
Meire Fortunato, Mohammad~Gheshlaghi Azar, Bilal Piot, Jacob Menick, Ian
  Osband, Alex Graves, Vlad Mnih, Remi Munos, Demis Hassabis, Olivier Pietquin,
  Charles Blundell, and Shane Legg.
\newblock Noisy networks for exploration.
\newblock In \emph{International Conference on Learning Representations
  (ICLR)}, 2018.

\bibitem[Gal \& Ghahramani(2016)Gal and Ghahramani]{GaGh16}
Yarin Gal and Zoubin Ghahramani.
\newblock Dropout as a {B}ayesian approximation: {R}epresenting model
  uncertainty in deep learning.
\newblock In \emph{International Conference on Machine Learning (ICML)}, 2016.

\bibitem[Guo et~al.(2017)Guo, Pleiss, Sun, and Weinberger]{guo2017calibration}
Chuan Guo, Geoff Pleiss, Yu~Sun, and Kilian~Q Weinberger.
\newblock On calibration of modern neural networks.
\newblock In \emph{International Conference on Machine Learning (ICML)}, 2017.

\bibitem[He et~al.(2016)He, Zhang, Ren, and Sun]{HeZh16}
Kaiming He, Xiangyu Zhang, Shaoqing Ren, and Jian Sun.
\newblock Deep residual learning for image recognition.
\newblock In \emph{IEEE Conference on Computer Vision and Pattern Recognition
  (CVPR)}, 2016.

\bibitem[Hinton \& Van~Camp(1993)Hinton and Van~Camp]{HiVC93}
Geoffrey~E Hinton and Drew Van~Camp.
\newblock Keeping the neural networks simple by minimizing the description
  length of the weights.
\newblock In \emph{Conference on Learning Theory (COLT)}, 1993.

\bibitem[Hiriart-Urruty \& Lemaréchal(2001)Hiriart-Urruty and
  Lemaréchal]{HiLe01}
Jean-Baptiste Hiriart-Urruty and Claude Lemaréchal.
\newblock \emph{Fundamentals of Convex Analysis}.
\newblock Springer, 2001.

\bibitem[Hochreiter \& Schmidhuber(1997)Hochreiter and Schmidhuber]{HoSc97}
Sepp Hochreiter and Jürgen Schmidhuber.
\newblock Flat minima.
\newblock \emph{Neural Computation}, 9\penalty0 (1):\penalty0 1--42, 1997.

\bibitem[Husain \& Knoblauch(2022)Husain and Knoblauch]{HuKn22}
Hisham Husain and Jeremias Knoblauch.
\newblock Adversarial interpretation of {B}ayesian inference.
\newblock In \emph{International Conference on Algorithmic Learning Theory},
  2022.

\bibitem[Immer et~al.(2021)Immer, Korzepa, and Bauer]{immer2021improving}
Alexander Immer, Maciej Korzepa, and Matthias Bauer.
\newblock Improving predictions of bayesian neural nets via local
  linearization.
\newblock In \emph{International Conference on Artificial Intelligence and
  Statistics (AISTATS)}, 2021.

\bibitem[Izmailov et~al.(2021)Izmailov, Vikram, Hoffman, and Wilson]{IzVi21}
Pavel Izmailov, Sharad Vikram, Matthew~D. Hoffman, and Andrew~Gordon Wilson.
\newblock What are bayesian neural network posteriors really like?
\newblock In \emph{International Conference on Machine Learning (ICML)}, 2021.

\bibitem[Jiang et~al.(2020)Jiang, Neyshabur, Mobahi, Krishnan, and
  Bengio]{JiNe20}
Yiding Jiang, Behnam Neyshabur, Hossein Mobahi, Dilip Krishnan, and Samy
  Bengio.
\newblock Fantastic generalization measures and where to find them.
\newblock In \emph{International Conference on Learning Representations
  (ICLR)}, 2020.

\bibitem[Jin et~al.(2020)Jin, Netrapalli, and Jordan]{JiNe20b}
Chi Jin, Praneeth Netrapalli, and Michael Jordan.
\newblock What is local optimality in nonconvex-nonconcave minimax
  optimization?
\newblock In \emph{International Conference on Machine Learning (ICML)}, 2020.

\bibitem[Kaddour et~al.(2022)Kaddour, Liu, Silva, and
  Kusner]{kaddour2022questions}
Jean Kaddour, Linqing Liu, Ricardo Silva, and Matt~J Kusner.
\newblock A fair comparison of two popular flat minima optimizers: Stochastic
  weight averaging vs. sharpness-aware minimization.
\newblock \emph{arXiv:2202.00661}, 2022.

\bibitem[Khan \& Rue(2021)Khan and Rue]{KhRu21}
Mohammad~Emtiyaz Khan and Haavard Rue.
\newblock The {B}ayesian learning rule.
\newblock arXiv:2107.04562, 2021.

\bibitem[Khan et~al.(2017)Khan, Lin, Tangkaratt, Liu, and Nielsen]{KhLi17b}
Mohammad~Emtiyaz Khan, Wu~Lin, Voot Tangkaratt, Zuozhu Liu, and Didrik Nielsen.
\newblock Variational adaptive-newton method for explorative learning.
\newblock In \emph{NIPS Workshop on Advances in Approximate Bayesian
  Inference}, 2017.

\bibitem[Khan et~al.(2018)Khan, Nielsen, Tangkaratt, Lin, Gal, and
  Srivastava]{KhNi18}
Mohammad~Emtiyaz Khan, Didrik Nielsen, Voot Tangkaratt, Wu~Lin, Yarin Gal, and
  Akash Srivastava.
\newblock Fast and scalable {B}ayesian deep learning by weight-perturbation in
  {A}dam.
\newblock In \emph{International Conference on Machine Learning (ICML)}, 2018.

\bibitem[Kim et~al.(2022)Kim, Li, Hu, and Hospedales]{kim2022fisher}
Minyoung Kim, Da~Li, Shell~X Hu, and Timothy Hospedales.
\newblock Fisher {SAM}: Information geometry and sharpness aware minimisation.
\newblock In \emph{International Conference on Machine Learning (ICML)}, 2022.

\bibitem[Kingma \& Ba(2015)Kingma and Ba]{KiBa14}
Diederik~P Kingma and Jimmy Ba.
\newblock Adam: A method for stochastic optimization.
\newblock In \emph{International Conference on Learning Representations
  (ICLR)}, 2015.

\bibitem[Kwon et~al.(2021)Kwon, Kim, Park, and Choi]{KwKi21}
Jungmin Kwon, Jeongseop Kim, Hyunseo Park, and In~Kwon Choi.
\newblock {ASAM}: Adaptive sharpness-aware minimization for scale-invariant
  learning of deep neural networks.
\newblock In \emph{International Conference on Machine Learning (ICML)}, 2021.

\bibitem[Liu et~al.(2022)Liu, Mai, Cheng, Chen, Hsieh, and You]{liu2022random}
Yong Liu, Siqi Mai, Minhao Cheng, Xiangning Chen, Cho-Jui Hsieh, and Yang You.
\newblock Random sharpness-aware minimization.
\newblock \emph{Advances in Neural Information Processing Systems (NeurIPS)},
  2022.

\bibitem[Maddox et~al.(2019)Maddox, Izmailov, Garipov, Vetrov, and
  Wilson]{MaIz19}
Wesley~J. Maddox, Pavel Izmailov, Timur Garipov, Dmitry~P. Vetrov, and
  Andrew~Gordon Wilson.
\newblock A simple baseline for bayesian uncertainty in deep learning.
\newblock In \emph{Advances in Neural Information Processing Systems
  (NeurIPS)}, 2019.

\bibitem[Malagò et~al.(2011)Malagò, Matteucci, and Pistone]{MaMa11}
Luigi Malagò, Matteo Matteucci, and Giovanni Pistone.
\newblock Towards the geometry of estimation of distribution algorithms based
  on the exponential family.
\newblock In \emph{Foundations of Genetic Algorithms {(FOGA)}}, 2011.

\bibitem[Mandt et~al.(2017)Mandt, Hoffman, and Blei]{mandt2017stochastic}
Stephan Mandt, Matthew~D Hoffman, and David~M Blei.
\newblock Stochastic gradient descent as approximate {B}ayesian inference.
\newblock \emph{J. Mach. Learn. Res. (JMLR)}, 18:\penalty0 1--35, 2017.

\bibitem[Murphy(2012)]{Mu12}
Kevin~Patrick Murphy.
\newblock \emph{Machine learning: a probabilistic perspective}.
\newblock MIT Press, 2012.

\bibitem[Osawa et~al.(2019)Osawa, Swaroop, Jain, Eschenhagen, Turner, Yokota,
  and Khan]{OsSw19}
Kazuki Osawa, Siddharth Swaroop, Anirudh Jain, Runa Eschenhagen, Richard~E
  Turner, Rio Yokota, and Mohammad~Emtiyaz Khan.
\newblock Practical deep learning with {B}ayesian principles.
\newblock \emph{Advances in Neural Information Processing Systems (NeurIPS)},
  2019.

\bibitem[Parikh \& Boyd(2013)Parikh and Boyd]{PaBo13}
Neal Parikh and Stephen Boyd.
\newblock Proximal algorithms.
\newblock \emph{Foundations and Trends in Optimization}, 1:\penalty0 123--231,
  2013.

\bibitem[Plappert et~al.(2018)Plappert, Houthooft, Dhariwal, Sidor, Chen, Chen,
  Asfour, Abbeel, and Andrychowicz]{plappert2017parameter}
Matthias Plappert, Rein Houthooft, Prafulla Dhariwal, Szymon Sidor, Richard~Y
  Chen, Xi~Chen, Tamim Asfour, Pieter Abbeel, and Marcin Andrychowicz.
\newblock Parameter space noise for exploration.
\newblock In \emph{International Conference on Learning Representations
  (ICLR)}, 2018.

\bibitem[Rasmussen \& Williams(2006)Rasmussen and Williams]{RaWi06}
Carl~Edward Rasmussen and Christopher K.~I. Williams.
\newblock \emph{Gaussian processes for machine learning}.
\newblock {MIT} Press, 2006.

\bibitem[Rockafellar \& Wets(1998)Rockafellar and Wets]{RoWe98}
Ralph~Tyrrell Rockafellar and Roger J.-B Wets.
\newblock \emph{Variational Analysis}.
\newblock Springer, 1998.

\bibitem[Smith \& Le(2018)Smith and Le]{SmLe18}
Samuel~L. Smith and Quoc~V. Le.
\newblock A {B}ayesian perspective on generalization and stochastic gradient
  descent.
\newblock In \emph{International Conference on Learning Representations
  (ICLR)}, 2018.

\bibitem[Tao \& An(1997)Tao and An]{Ta97}
Pham~Dinh Tao and Le~Thi~Hoai An.
\newblock Convex analysis approach to {DC} programming: Theory, algorithms and
  applications.
\newblock \emph{Acta Math. Vietnam.}, 22\penalty0 (1):\penalty0 289--355, 1997.

\bibitem[Toland(1978)]{To78}
John~F Toland.
\newblock Duality in nonconvex optimization.
\newblock \emph{J. Math. Anal. Appl.}, 66\penalty0 (2):\penalty0 399--415,
  1978.

\bibitem[Wainwright et~al.(2008)Wainwright, Jordan, et~al.]{WaJo08}
Martin~J Wainwright, Michael~I Jordan, et~al.
\newblock Graphical models, exponential families, and variational inference.
\newblock \emph{Foundations and Trends® in Machine Learning}, 1\penalty0
  (1--2):\penalty0 1--305, 2008.

\bibitem[Wu et~al.(2020)Wu, Xia, and Wang]{WuXi20}
Dongxian Wu, Shu-Tao Xia, and Yisen Wang.
\newblock Adversarial weight perturbation helps robust generalization.
\newblock In \emph{Advances in Neural Information Processing Systems
  (NeurIPS)}, 2020.

\bibitem[Zellner(1988)]{Ze88}
Arnold Zellner.
\newblock Optimal information processing and {B}ayes's theorem.
\newblock \emph{The American Statistician}, 42\penalty0 (4):\penalty0 278--280,
  1988.

\bibitem[Zhang(1999)]{Zh99}
Tong Zhang.
\newblock Theoretical analysis of a class of randomized regularization methods.
\newblock In \emph{Conference on Learning Theory (COLT)}, 1999.

\bibitem[Zheng et~al.(2021)Zheng, Zhang, and Mao]{ZhZh20}
Yaowei Zheng, Richong Zhang, and Yongyi Mao.
\newblock Regularizing neural networks via adversarial model perturbation.
\newblock In \emph{IEEE Conference on Computer Vision and Pattern Recognition
  (CVPR)}, 2021.

\end{thebibliography}
\bibliographystyle{iclr2023_conference}

\clearpage
\appendix

\section*{Appendix}

\section{Derivation of the Fenchel Biconjugate} 
\label{app:fenchel_derivation}
Here, we give the derivation of the Fenchel Biconjugate given in \Cref{eq:biconjugate_clean}. For an isotropic Gaussian, we have $\vT(\vparam) = (\vparam, \vparam \vparam^\top)$ and $\bl' = (\nicefrac{\vm}{\sigma^2}, -\nicefrac{1}{(2\sigma^2)} \vI)$. We will use the following identity to simplify,
\begin{align}
   \ip{\bl'}{\vT(\vparam)} &= { \frac{\vm^\top}{\sigma^2} \vparam + \trace\rnd{-\frac{\vI}{2\sigma^2} \vparam\vparam^\top} } 
   =  -\frac{1}{2\sigma^2}\norm{\vm - \vparam}^2 + \frac{1}{2\sigma^2} \norm{\vm}^2. \label{eq:identity1}
\end{align}
Using this, the first term in~\Cref{eq:biconj_def} becomes
\begin{align}
  \ip{\bl'}{\vmu} &= \bbE_{\bt \sim q_{\text{\vmu}}}[\ip{\bl'}{\vT(\vparam)}] 
  = \bbE_{\bt \sim q_{\text{\vmu}}} \sqr{ -\frac{1}{2\sigma^2}\norm{\vm - \vparam}^2} + \frac{1}{2\sigma^2} \norm{\vm}^2.  \label{eq:biconjpart1}
\end{align}
Similarly, for the second term in~\Cref{eq:biconj_def}, we use \Cref{thm:exptomax} and \Cref{eq:identity1} to simplify, 
\begin{align}
   \sup_{\bmu^{''} \in \cM} ~ \ip{\bmu''}{\bl'} - \bbE_{\text{\vparam} \sim q_{\text{\vmu}}}[-\ell(\vparam)] &= \sup_{\text{\vparam} \in \Theta} ~ \ip{\bl'}{\vT(\vparam)} + \ell(\vparam) \nonumber \\
   &= \sup_{\text{\vparam} \in \Theta} ~\sqr{ -\frac{1}{2 \sigma^2} \norm{\vparam - \vm}^2 +  \frac{1}{2\sigma^2} \norm{\vm}^2 + \ell(\vparam) } \nonumber \\
   &= \sqr{\sup_{\be} ~ \ell(\vm + \vepsilon) -\frac{1}{2 \sigma^2} \norm{\be}^2 } + \frac{1}{2\sigma^2} \norm{\vm}^2, \label{eq:biconjpart2}
\end{align}
where in the last step we have performed a change of variables $\vparam - \vm = \vepsilon$.
Subtracting \eqref{eq:biconjpart2} from \eqref{eq:biconjpart1}, we arrive at \Cref{eq:biconjugate_clean}.

\section{Proof of Theorem~\ref{thm:exptomax}}
\label{app:asm1}
\exptomax*
\begin{proof}
   We first pull out the expectation in the left-hand side of \Cref{eq:expect_to_max},
\begin{align*}
  \sup_{\text{\vmu}'' \in \mathcal{M}} \,\, \myang{\vmu'', \vlambda'} - \bbE_{\text{\vparam} \sim q_{\text{\vmu}''}}[-\ell(\bt)] = \sup_{\text{\vmu}'' \in \cM} \bbE_{\bt \sim
    q_{\text{\vmu}''}}[ \underbrace{ \ip{\bl'}{\T(\bt)} + \ell(\bt) }_{=J(\text{\vparam})}].
\end{align*}
We will denote the expression inside the expectation by $J(\bt)$, and
prove that the supremum over $\vmu''$ is equal to the supremum over $\vparam$. We will do so for two separate cases where the supremum $ J_* = \sup_{\bt \in \Theta} ~ J(\bt) < \infty$ is finite, and infinite respectively.

   \textbf{When $J_*$ is finite:} For this case, our derivation proceeds in two steps, where we show that both the following two inequalities are true at the same time,
   \[
      \sup_{\text{\vmu}'' \in \mathcal{M}} \bbE_{\bt \sim q_{\text{\vmu}''}}[ J(\vparam)] \le J_*, \quad\quad
      \sup_{\text{\vmu}'' \in \mathcal{M}} \bbE_{\bt \sim q_{\text{\vmu}''}}[ J(\vparam)] \ge J_*.
   \]
 This is only possible when the two are equal, which will establish the result. 

To prove the first inequality, we take a sequence $\{ \bt_t \}_{t=1}^\infty$   
for which $J(\bt_t) \to J_*$ when sending $t \to
\infty$. By the definition of supremum, there always exists such a
sequence. From the sequence $\{ \bt_t \}_{t=1}^\infty$ we will now construct a
   sequence $\{ \vmu_t'' \}_{t=1}^\infty$ for which the expectation of $J(\vparam)$ also converges to
$J_*$. This will establish that the supremum
over $\vmu''$ is larger or equal than the supremum over $\bt$. It is only
larger or equal, since hypothetically there could be another sequence
of $\{ \vmu_t'' \}_{t=1}^\infty$ which achieves
higher value. We will see soon that this cannot be the case.

We use the sequence $\{ \vmu_t'' \}_{t=1}^\infty$ where $\vmu_t'' = (\bt_t,
\vSigma_t + \bt_t \bt_t^\top)$ with $\{ \vSigma_t \}_{t=1}^\infty
\subset \mathbb{S}_+^{P\times P}$ is taken in the space of
symmetric positive-definite matrices with the sequence such that $\vSigma_t \to 0$.
Now, we can see that the expectation $\bbE_{\bt \sim
  q_{\text{\vmu}_t''}}[J(\bt)]$ converges to the supremum as follows:
   \begin{equation*}
   \begin{split}
      \lim_{t\to\infty} \myexpect_{\text{\vparam}\sim q_{\text{\vmu}_t''}} [J(\vparam)] 
      &= \lim_{t\to\infty} \myexpect_{\text{\vepsilon}\sim \text{\gauss}(0,\text{\vI}) } \sqr{ J \rnd{ \vparam_t + \vSigma_t^{1/2} \vepsilon } } \\
      &= \myexpect_{\text{\vepsilon}\sim \text{\gauss}(0,\text{\vI}) } \sqr{ \lim_{t\to\infty} J \rnd{ \vparam_t + \vSigma_t^{1/2} \vepsilon } } \\
      &= \sup_{\text{\vparam} \in\text{\real}^P} J(\vparam).
   \end{split}
   \end{equation*}
   In the above steps, we used the fact that $J$ is
   Gaussian-integrable and
   continuous. A sufficient condition for the integrability is that $\ell(\bt)$ is lower-bounded and majorized by quadratics.  With this, we have established that the supremum over $\vmu''$ is at least as large as the one over $\bt$.

   To prove the second inequality, we will show that the supremum over
   $\vmu''$ is also less or equal
   than the supremum over $\bt$. To see that, we note that
   $J(\bt') \leq J_*$ for all $\bt' \in
   \Theta$. This implies that for any distribution $q_{\text{\vmu}''}$, we have that
   $\bbE_{\bt' \sim q_{\text{\vmu}''}}[J(\bt')] \leq J_*$.
   Since the inequality holds for any distribution, we also have that
$\sup_{\text{\vmu}'' \in \cM} ~ \bbE_{\bt' \sim q_{\text{\vmu}''}}[J(\bt')] \leq J_*
$. Having established both inequalities,
we now know that the suprema in \eqref{eq:expect_to_max} agrees whenever the supremum over $\bt$ is finite.

   \textbf{When $J_*$ is infinite:} For such cases, we will show that $\sup_{\text{\vmu}'' \in \cM} \bbE_{\bt \sim
     q_{\text{\vmu}''}}[J(\bt)] = \infty$, which will complete the proof. Again, by
   definition of supremum, there exists a sequence $\{\bt_t \}_{t=1}^\infty$ for which $J(\bt_t) \to \infty$ when $t \to \infty$. This means that for any $M > 0$ there is a
  parameter $\bt_{t(M)}$ in the sequence such that $J(\bt_{t(M)}) >
  M$. Now from the previous part of the proof, we know that we can construct a
  sequence $q_{\text{\vmu}_t''}$ of Gaussians whose expectation converges to
  $J(\bt_{t(M)})$. Since the above holds for any $M > 0$, the
  supremum over the expectation is also $+\infty$.

  \textbf{Conditions for existence of $\vlambda'$ for which $J_*$ is finite:}
  The final step of our proof is to give the conditions when there exist at $\vlambda'$ for which $J_*$ is finite, and therefore the result is non-trivial.

Since $\ell(\vparam)$ is majorized by a quadratic function, there
exists a constant $c \in \real$, vector $\vb\in\real^P$ and matrix
$\vA\in\mathbb{S}_+^{P\times P}$ such that the following bound holds:
    \[
      \loss(\vparam) \le c + \vb^\top \vparam + \half\vparam^\top \vA \vparam.
      \]
      Then for the candidate $\vlambda' = (-\vb, -\half \vA)$ we know that
       $J(\vparam) = \loss(\vparam) + \myang{ \vlambda', \vT(\vparam) } \le c$
   has a finite upper bound for any $\vparam$. Hence, the supremum is
   finite for this choice of $\bl' \in \Omega$.

   \textbf{Domain of $f^*(\bl')$:}
   For a Gaussian distribution, valid $(\bl_1', \bl_2') \in \Omega$ satisfy the constraint $\bl_2' \prec 0$.
   We now show that if $\bl' \notin \Omega$, then $f^*(\bl') = +\infty$. Since $\bl_2' \nprec 0$, there exists a direction $\vd \neq 0$ such that $\vd^\top \bl_2' \vd \geq 0$ and ${\bl_1'}^\top \vd \geq 0$.
   Taking the sequence $\{ \vparam_t \}_{t=0}^\infty$ with $\vparam_t = t \cdot \vd$ we see
   $$
   J(\vparam_t) = t {\bl_1'}^\top \vd + t^2 \vd^\top {\bl_2'} \vd + \ell(t \vd) \geq \ell(t \vd) \to \infty ~ \text{ as } t \to \infty,
   $$
   since coercivity of $\ell(\bt)$ implies that $\ell(\vparam_t) \to \infty$ for any sequence $\{ \vparam_t \}_{t=1}^\infty$ with $\norm{\vparam_t} \to \infty$.
\end{proof}
   
 Some additional comments on the assumptions used in the theorem are
 in order. 
 \begin{enumerate}
\item Any reasonable loss function that one would want to minimize is lower-bounded, so this assumption is always satisfied in practice.

\item The existence of an upper-bounding quadratic holds for functions with Lipschitz
continuous gradient, which is a standard assumption in optimization
known as the descent lemma. In practice, our conjugates are also
applicable to cases where this assumption is not satisfied. In such a
case, one can consider the conjugate function only in a local
neighbourhood around the current solution by restricting the set of
admissible $\vparam$ in the supremum. It may not matter that
the bounding quadratic will eventually intersect the loss far away
outside of our current region of interest.  

\item We assumed that $\ell(\bt)$ is continuous
since virtually all loss functions used in deep learning are
continuous. Interestingly, if $\ell(\bt)$ is not continuous, the
conjugate function $f^*$ will involve its semicontinous relaxation.

\item The coercivity assumption on $\ell(\bt)$ is another standard assumption in optimization. For non-coercive losses, there could exist quadratic upper-bounds which are flat in some direction. This would lead to a dual variable $\vlambda'$ at the boundary of $\Omega$.
\end{enumerate}
  
The above \Cref{thm:exptomax} also holds for other exponential family distributions such as scalar Bernoulli and Gamma distributions, but we do not give more details here because they are not relevant for connecting to SAM. There are also cases where the result does not hold. The simplest case
is a Gaussian with fixed covariance. 
The above proof does not work there, because the
closure~(see e.g., \cite{CsMa05,MaMa11}) of the family of distributions does not contain all Dirac delta distributions. Intuitively, the requirement is that we should be able to approach any delta as a limit of a sequence of members of the family, which excludes the fixed-covariance case.  

\section{Derivation of the Equivalent Relaxed Objective \eqref{eq:sambayes}}
\label{app:dcdual}
Inserting the biconjugate~\eqref{eq:biconjugate_clean} into the relaxed Bayes objective~\eqref{eq:relaxedBayes} yields
\begin{equation}
\sup_{\text{\vmu} \in \cM} \sup_{\text{\vm}, \sigma} \,\,-\sqr{ \sup_{\text{\vepsilon}} \, \loss(\vm + \vepsilon) - \frac{1}{2\sigma^2} \| \vepsilon \|^2 } + \bbE_{\text{\vparam} \sim q_{\text{\vmu}}} \sqr{ -\frac{1}{2\sigma^2} \norm{\vparam - \vm}^2 + \log p(\vparam) - \log q_{\text{\vmu}}(\vparam) }, \nonumber
\end{equation}
where we expanded the KLD as $\myKL{q_{\text{\vmu}}(\vparam)}{p(\vparam)} = \bbE_{\text{\vparam} \sim q_{\text{\vmu}}}[\log q_{\text{\vmu}}(\vparam)] - \bbE_{\text{\vparam} \sim q_{\text{\vmu}}}[\log p(\vparam)]$. 

Interchanging the order of maximization and noticing that the first term does not depend on $\vmu$, we can find a closed-form expression for the maximization in $\vmu$ as follows: 
\begin{align}
  \sup_{\text{\vmu} \in \cM} ~ &\bbE_{\text{\vparam} \sim q_{\text{\vmu}}} \sqr{ -\frac{1}{2\sigma^2} \norm{\vparam - \vm}^2 + \log p(\vparam) - \log q_{\text{\vmu}}(\vparam) } \nonumber \\
  &= \sup_{\text{\vmu} \in \cM} \bbE_{\text{\vparam} \sim q_{\text{\vmu}}} \sqr{ \log \frac{1}{\mathcal{Z}} \exp \left(-\frac{1}{2\sigma^2} \norm{\vparam - \vm}^2 \right) p(\vparam) - \log q_{\text{\vmu}}(\vparam) } + \log \mathcal{Z} \nonumber \\
  &= \log \mathcal{Z} - \inf_{\text{\vmu} \in \cM} \myKL{q_{\text{\vmu}}(\vparam)}{\frac{1}{\mathcal{Z}} \exp \left(-\frac{1}{2\sigma^2} \norm{\vparam - \vm}^2 \right)p(\vparam)} \nonumber  \\
  &= \log \mathcal{Z}. \nonumber 
\end{align}
In the last step, 
the KLD is attained at zero with $q_{\text{\vmu}}(\vparam) = \frac{1}{\mathcal{Z}} \exp \left(-\frac{1}{2\sigma^2} \norm{\vparam - \vm}^2 \right)p(\vparam)$.

Finally, we compute $\mathcal{Z}$ as
\begin{align}
  \mathcal{Z} &= \int \exp \left(-\frac{1}{2\sigma^2} \norm{\vparam - \vm}^2 \right) p(\vparam) \dd \vparam \nonumber  \\
  &= (2 \pi \sigma^2)^{P/2} \int (2 \pi \sigma^2)^{-P/2} \exp \left(-\frac{1}{2\sigma^2} \norm{\vparam - \vm}^2 \right) p(\vparam) \dd \vparam \nonumber \\
  &=  (\sigma^2)^{P/2} (\sigma^2 + 1 / \delta_0)^{-P/2} \exp\left( -\frac{\norm{\vm}^2}{2(\sigma^2 + \frac{1}{\delta})}  \right), \nonumber 
\end{align}
where in the last step we inserted the normalization constant for the product of the two Gaussians $\gauss(\bt \,|\, \vm, \sigma^2 \vI)$ and $\gauss(\bt \,|\, 0, \frac{1}{\delta_0} \vI)$ as for example given in \citet[App.~2]{RaWi06}.

This gives the following expression 
\begin{align}
  \log \mathcal{Z} &= \frac{P}{2} \log(\sigma^2) - \frac{P}{2} \log (\sigma^2 + 1 / \delta_0) - \frac{1}{2(\sigma^2 + \frac{1}{\delta_0})} \norm{\vm}^2 \nonumber \\
  &= \frac{P}{2} \log(\sigma^2) + \frac{P}{2} \log(\delta') - \frac{\delta'}{2} \norm{\vm}^2,
  \label{eq:logz_final}
\end{align}
where $\delta' = 1 / (\sigma^2 + 1 / \delta_0)$.

Switching to a minimization over $\vm, \sigma^2$ and inserting the result for the exact $\vmu$-solution \eqref{eq:logz_final} leads to the final objective 
\begin{align}
  \inf_{\bmm, \sigma^2} ~ \sqr{ \sup_{\text{\vepsilon}} \, \loss(\vm + \vepsilon) - \frac{1}{2\sigma^2} \| \vepsilon \|^2 } + \frac{\delta'}{2} \norm{\vm}^2 - \frac{P}{2} \log(\sigma^2) - \frac{P}{2} \log(\delta'), \nonumber 
\end{align}
which is the energy function in~\eqref{eq:sambayes}.

\section{Proof of Theorem~\ref{thm:equiv}}
\label{app:trustprox}
\equivalence*
\begin{proof}
  We will show that any stationary point of $\mathcal{E}_{\text{SAM}}$ is also a stationary point of $\mathcal{E}_{\text{relaxed}}$ when $\sigma$ and $\delta'$ are chosen appropriately. The correspondence between the stationary points follows directly from the equivalences between proximal and trust-region formulations~\cite[Sec.~3.4]{PaBo13}.

  To show this, we first write the two objectives in terms of perturbation $\vepsilon'$ and $\vepsilon$ respectively,
   \begin{align*}
      \min_{\text{\vm}} \, \, \sup_{\text{\vepsilon}'} \, & \, \, \ell(\bmm + \be') - \frac{1}{2 \sigma^2} \norm{\be'}^2 + \frac{\delta'}{2} \norm{\bmm}^2, \\
      \min_{\text{\vparam}} \, \, \sup_{\text{\vepsilon}} \, & \, \,\ell(\bt + \be) - \ind \{ \norm{\be} \leq \rho \} + \frac{\delta}{2} \norm{\bt}^2,
   \end{align*}
where $\ind \{ \norm{\cdot} \leq \rho \}$ is the indicator function
which is zero inside the $\ell_2$-ball of radius $\rho$ and $+\infty$
otherwise.

 Since these are min-max problems, we need an appropriate notion of ``stationary point'', which we consider here to be a local Nash equilibirum~\cite[Prop.~3]{JiNe20b}. At such points, the first derivatives in $\bt$ and $\be'$ (or $\be$) vanish. Taking the derivatives and setting them to 0, we get the following conditions for the two problems,
\begin{align*}
   \delta' \bmm_* &= - \nabla \ell(\bmm_* + \be'_*), &&\be'_* = \sigma^2 \, \nabla
    \ell(\bmm_* + \be'_*), \\
    \delta \bt_* &= - \nabla \ell(\bt_* + \be_*), &&\be_* = \frac{\rho}{\mu} \nabla \ell(\bt_*
    + \be_*),
\end{align*}
for some constant multiplier $\mu > 0$. Here, we used the assumption that the constraint $\norm{\be_*} \leq
\rho$ is active at our optimal point. Since the constraint is active,
the negative gradient is an element of the (non-trivial) normal cone,
which gives us the multiplier $\mu$ in the second line above.

The two equations are structurally equivalent. Any pair $(\bmm_*, \be_*')$
   satisfying the first line also satisfies the second line for $\sigma^2 = \rho / \mu$ and $\delta' = \delta$. The opposite is also true when using the pair $(\vparam_*, \vepsilon_*)$ in the first line, which proves the theorem.
We assume that the constraint in the perturbation is active, that is, $\norm{\be_*} = \rho$. This assumption would be violated if there exists a local maximum within a $\rho$-ball around the parameter $\bt_*$. However, such a local maximum is unlikely to exist since the parameter $\bt_*$ is determined by minimization and tends to lie inside a flat minimum.
\end{proof}

\section{Derivation of the bSAM Algorithm}
\label{app:bSAM}
In this section of the appendix, we show how the natural gradient descent update \eqref{eq:blr} leads us to the proposed bSAM algorithm shown in \Cref{fig:bSAM}.
As mentioned in the main text, for bSAM we consider a generalized setting where our Gaussian distribution $q_{\text{\vmu}}(\vparam) = \cN(\vparam \, | \, \vomega, \vV)$ has diagonal covariance. This corresponds to the following setting:
\begin{equation}
   \vT(\vparam) = (\vparam, \vparam \vparam^\top),\quad
   \vmu = \rnd{\vomega, \vomega^2 + \text{diag}(\bs)^{-1}}, \quad
   \vlambda = \rnd{\vs \cdot \vomega, -\half \text{diag}(\bs)}.
   \label{eq:gaussian_params_diag}
\end{equation}
Here, $\vomega \in \R^P$ is the mean and $\text{diag}(\vs) = \vV^{-1}$ denotes the entries of the inverse diagonal covariance matrix. All operations (squaring, multiplication) are performed entrywise.

\subsection{The gradient of the biconjugate}
First, we discuss how to compute that gradient of the Fenchel biconjugate, as the BLR update~\eqref{eq:blr} requires it in every iteration.
The diagonal Gaussian setup leads to a slightly generalized expression for the Fenchel biconjugate function:
\begin{align}
  -&f^{**}(\vmu)
= \min_{\text{\vm} \in \R^P, \text{\vb} \in \R_+^P} \left[ ~ \sup_{\text{\vepsilon} \in \R^P} ~ \ell(\vm + \vepsilon) - \frac{1}{2} \norm{\vSigma^{-\frac{1}{2}} \vepsilon}^2 ~ \right] + \bbE_{\text{\vparam} \sim q_{\text{\vmu}}} \left[ \frac{1}{2} \norm{\vSigma^{-\frac{1}{2}} (\vparam - \vm)}^2 \right] \nonumber \\
  &= \min_{\text{\vm} \in \R^P, \text{\vb} \in \R_+^P} \left[ ~ \sup_{\text{\vepsilon} \in \R^P} ~ \ell(\vm + \vepsilon) - \frac{1}{2} \norm{\vSigma^{-\frac{1}{2}} \vepsilon}^2 ~ \right] + \frac{1}{2} \norm{\vSigma^{-\frac{1}{2}}(\vomega - \vm)}^2 + \frac{1}{2} \tr(\vSigma^{-1} \vV),
  \label{eq:exactng}
\end{align}
where $\vSigma = \text{diag}(1 / \vb)$ denotes the diagonal covariance matrix with $\mathbf{1} / \vb$ on its diagonal.

It follows from results in convex analysis, for instance from \citet[Proposition~11.3]{RoWe98}, that $\nabla f^{**}(\bmu) = (\vSigma_*^{-1}\vm_*, -\half \vb_*)$, that is, the gradient of the biconjugate can be constructed from the optimal solution $(\vm_*, \vb_*)$ of the optimization problem \eqref{eq:exactng}. 

In \eqref{eq:exactng} there are three optimization steps: over $\vepsilon, \vm$, and $\vb$. For the first two, we will make very similar approximation to what is used for SAM \citep{FoKl21}, and simply add an additional optimization over $\vb$.
To simplify computation, we will make an additional approximation. Instead of an iterative joint minimization in $\vm$ and $\vb$, we will perform a single block-coordinate descent step where we first minimize in $\vm$ for fixed $\vb = \vs$ and then afterwards in $\vb$ for fixed $\vomega = \vm$. This will give rise to a simple algorithm that works well in practice.

\paragraph{Optimization with respect to $\vepsilon$.}
 To simplify the supremum in $\vepsilon$ we will use SAM's technique of local linearization~\citep{FoKl21} with one difference. Instead of using the current solution to linearize, we will sample a random linearization point $\bt \sim \cN(\bt \, | \, \vomega, \diag(\vs)^{-1})$. This is preferable from a Bayesian perspective and is also used in the other Bayesian variants \citep{KhNi18}. Linearization at the sampled $\vparam$ then gives us,
\begin{equation}
  \ell(\vm + \be) \approx \ell(\bt) + \ip{\nabla \ell(\bt)}{\vm + \be - \bt}. \nonumber
\end{equation}
Using this approximation in \eqref{eq:exactng}, we get a closed form
solution for ${\be}_* = \vSigma \nabla \ell(\bt)$.

\paragraph{Optimization with respect to $\vm$.} To get a closed-form solution for the optimization in $\vm$, we again consider a linearized loss: $\ell(\vm + \be) \approx \ell(\vomega + \be) + \ip{\nabla \ell(\vomega + \be)}{\vm - \vomega}$. Moreover, we set $\be = \vV \nabla \ell(\bt)$, corresponding to a fixed $\vb = \vs$. This reduces the optimization of \eqref{eq:exactng} to
\begin{equation}
  \vm_* = \argmin_{\text{\vm} \in \R^P} ~ \ip{\nabla \ell(\vomega + \be)}{\vm} + \frac{1}{2} \norm{\vSigma_*^{-1/2} (\vm - \vomega)}^2, 
  \label{eq:m_minimization}
\end{equation}
which has closed form solution $\vm_* = \vomega - \vSigma_* \nabla \ell(\vomega + \be)$ which we here leave to depend $\vSigma_*$.

\paragraph{Optimization with respect to $\vSigma$ via $\vb$.}
 Inserting $\be_*$ into \eqref{eq:exactng} with the linearized loss, the minimization over $\vb$ while fixing $\vm = \vomega$ also has a simple closed-form solution as shown below, 
\begin{equation}
  \bbb_* = \argmin_{\bbb \in \R_+^P} ~ \norm{\vSigma^{1/2} \nabla
    \ell(\bt)}^2 + \sum_{i=1}^P \frac{\bbb_i}{\bs_i} = \sqrt{\bs \cdot \bg^2} = |\bg| \cdot \sqrt{\bs}, 
  \label{eq:s_minimization}
\end{equation}
where $\bt \sim \cN(\bt \, | \, \bmm, \diag(\bs)^{-1})$ and $\bg = \nabla \ell(\bt)$.

\subsection{Simplifying the natural-gradient updates}
Having a way to approximate the biconjugate-gradient, we now rewrite and simplify the BLR update~\eqref{eq:blr}.
The derivations are largely similar to the ones used to derive the 
variational online Newton approaches, see \citet{KhLi17b,KhNi18,KhRu21}.

Inserting the parametrization \eqref{eq:gaussian_params_diag} and our expression $\nabla f^{**}(\bmu) = (\vSigma_*^{-1} \vm_*, -\half \vb_*)$ into the BLR update~\eqref{eq:blr}, we get
the following updates
\begin{align*}
  \vs \cdot \vomega &\leftarrow (1 - \alpha) \vs \cdot \vomega + \alpha \vSigma_*^{-1} \vm_*, \\
  \vs &\leftarrow (1 - \alpha) \vs + \alpha (\vb_* + \delta_0),
\end{align*}
where we also used the prior $\bl_0 = (0, -\half \delta_0 \vI)$. By changing the order of the updates, we can write
them in an equivalent form that resembles an adaptive gradient method:
\begin{align}
  \vs &\leftarrow (1 - \alpha) \vs + \alpha (\vb_* + \delta_0), \label{eq:preexactblr1} \\
  \vomega &\leftarrow \vomega - \alpha \vV \sqr{ \vSigma_*^{-1} \left( \vomega - \vm_* \right) + \delta \vomega }. \label{eq:preexactblr2}
\end{align}

Now, inserting the solutions $\vSigma_*^{-1} (\vomega - \vm_*) = \nabla \ell(\bt + \be)$ and $\vb_* = |\nabla \ell(\bt)| \cdot \sqrt{\bs}$ from \Cref{eq:m_minimization,eq:s_minimization} into these updates, we arrive at
\begin{align}
  \vs &\leftarrow (1 - \alpha) \vs + \alpha \left( |\nabla \ell(\vparam)| \cdot \sqrt{\bs} + \delta_0 \right), \label{eq:exactblr1} \\
  \vomega &\leftarrow \vomega - \alpha \vV \left( \nabla \ell(\vomega + \be) + \delta_0 \vomega \right), \label{eq:exactblr2}
\end{align}
where $\be = \vV \nabla \ell(\bt)$ and $\vparam \sim \cN(\vparam \, | \, \vomega, \vV)$.

\subsection{Further modifications for large-scale deep learning}
Now we will consider
further modifications to the updates \eqref{eq:exactblr1} -- \eqref{eq:exactblr2} to arrive at our bSAM method shown in~\Cref{fig:bSAM}. These are similar to the ones used for VOGN~\citep{OsSw19}.

First, we consider a stochastic approximation to deal with larger
problems. To that end, we now briefly describe the general learning setup. The loss is then written as a sum of
individual losses, $\ell(\bt) = \sum_{i=1}^N \ell_i(\bt)$ where $N$ is
the number of datapoints. The $N$ datapoints are disjointly partitioned into $K$ minibatches
$\{ B_i\}_{i=1}^K$ of $B$ examples each.

We then apply a stochastic variant to the
bound $N \frac{1}{K} \sum_{i=1}^K
f_i^{**}(\bmu) \leq f^{**}(\bmu)$, where the individual functions are $f_i(\bmu) = \frac{1}{B} \sum_{j \in B_i} \bbE_{\bt \sim q_{\bmu}}[-\ell_j(\bt)]$.
Essentially, the variant computes the gradient on an incrementally sampled ${f_i}^{**}$ rather than considering the full dataset.

With $\bs = N \hat \bs$, $\delta_0 = N \delta$ this turns \eqref{eq:exactblr1} -- \eqref{eq:exactblr2} into the following updates: 
\begin{align}
  N \hat \bs &\leftarrow (1 - \alpha) N \hat \bs + \alpha \sqr{ N
               \delta + N \sqrt{N \hat \bs} \cdot |\bg|   },  \label{eq:blrN1} \\
  \vomega &\leftarrow \vomega - \alpha \sqr{ N \delta \vomega + N \bg_{\epsilon} } / (N \hat \bs), ~ \label{eq:blrN2}  
\end{align}
where $\bg = \frac{1}{B} \sum_{j \in \cM} \nabla \ell_j(\vparam)$,
$\vparam \sim  \mathcal{N}(\vparam \, | \, \vomega, \text{diag}(\vsigma^2) )$, $\vsigma^2 = (N\hat \bs)^{-1}$ and $\cM$ denotes a randomly drawn minibatch.
Moreover, we have $\bg_{\epsilon} = \frac{1}{B} \sum_{j \in \cM} \nabla
\ell_j(\vomega + \vepsilon)$ with $\vepsilon = \bg / (N \hat \bs) = \rho \,
\bg / \hat \bs$ where we introduce the step-size $\rho > 0$ in the
adversarial step to absorb the $1 / N$ factor and to gain additional control over the perturbation strength.  

Following the practical improvements of~\citet{OsSw19}, we introduce
a damping parameter $\gamma > 0$ and seperate step-size $\beta_2 > 0$ on the precision estimate, as well as an exponential moving average for the gradient $\beta_1 > 0$.
Under these changes, and dividing out the $N$-factors in \eqref{eq:blrN1} -- \eqref{eq:blrN2} we arrive at the following method:
\begin{align}
  \vg_m &\leftarrow \beta_1 \vg_m + (1 - \beta_1) \left( \delta \vomega + \bg_{\epsilon} \right), \label{eq:bsam1} \\ 
  \hat \bs &\leftarrow \beta_2 \hat \bs + (1 - \beta_2) \sqr{
               \delta + \gamma + \sqrt{N \hat \bs} \cdot |\vg| },  \\
  \vomega &\leftarrow \vomega - \alpha \vg_m / \hat \bs, \label{eq:bsam2}
\end{align}
with $\vg$ and $\vg_{\epsilon}$ defined as above. In practice, the method performed better without the
$\sqrt{N}$-factor in the $\hat \bs$-update. Renaming $\hat \bs$ as $\bs$, the updates \eqref{eq:bsam1} -- \eqref{eq:bsam2} are equivalent to bSAM~(\Cref{fig:bSAM}).

\subsection{bSAM Algorithm with $m$-Sharpness}
\label{app:msharpalg}
The bSAM algorithm parallelizes well onto multiple accelerators. This is done by splitting up a minibatch into $m$ parts and computing independent perturbations for each part in parallel. The algorithm remains exactly the same as~\Cref{fig:bSAM}, with lines $3$ -- $6$ replaced by the following lines $1$ -- $8$:
\begin{align*}
  &\textbf{1:} ~~ \text{Equally partition $\cM$ into $\{ \cM_1, \hdots, \cM_m \}$ of $B$ examples each}\\
  &\textbf{2:} ~~ \text{\textbf{for} $k=1 \hdots m$ in parallel \textbf{do}}\\
  &\textbf{3:} ~~ ~~~~ \vparam_k \leftarrow \vomega + \ve_k, \ve_k \sim \gauss(\ve_k \, | \, 0,\vsigma^2),\,\,\,  \vsigma^2 \leftarrow \vone/ (N\cdot\vs)\\
  &\textbf{4:} ~~ ~~~~ \vg_k \leftarrow (1 / B) \textstyle \sum_{i \in \cM_k} \nabla \ell_i(\vparam_k) \\
  &\textbf{5:} ~~ ~~~~ \be_k \leftarrow \vg_k / \vs \\
  &\textbf{6:} ~~ ~~~~ \vg_{\epsilon, k} \leftarrow (1 / B) \textstyle \sum_{i \in \cM_k} \nabla \ell_i(\vomega + \ve_k)\\
  &\textbf{7:} ~~ \textbf{end for}\\
  &\textbf{8:} ~~ \vg_{\epsilon} \leftarrow (1 / m) \textstyle \sum_{k=1}^m \vg_{\epsilon,k}, ~ \vg \leftarrow (1/m) \textstyle \sum_{k=1}^m \vg_k 
\end{align*}

\section{Additional Details and Experiments}

\subsection{Details on the Logistic Regression Experiment}
\label{app:logreg}

\Cref{fig:logreg_a} shows the binary classification data-set (red vs blue circles) we adopted from~\cite[Ch.~8.4]{Mu12}. The data can be linearly seperated by a linear classifier with two parameters whose decision boundary passes through zero. We show the decision boundaries obtained from the MAP solution and from samples of the exact Bayesian posterior in \Cref{fig:logreg_a}.

\Cref{fig:logreg_b} shows the 2D probability density function of the exact Bayesian posterior over the two parameters, where the black cross indicates the mode (MAP solution), and the orange triangles the weight-samples corresponding to the classifiers shown in the left figure.
In \Cref{fig:logreg_c}, we show the solutions found by SAM, bSAM and an approximate Bayesian posterior.
On this example, the posterior approximation found by bSAM is similar to the Bayesian one. Performing a Laplace approximation at the SAM solution leads to the covariance shown by the green dashed ellipse. It overestimates the extend of the posterior as it is based only on local information.

\begin{figure}[h!]
  \centering
  \subfigure[]{\includegraphics[height=3.2cm]{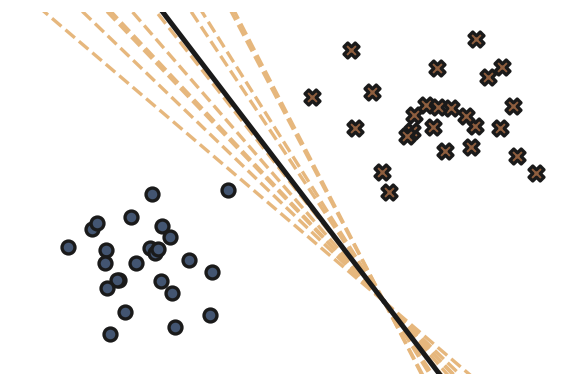} \label{fig:logreg_a}}
  \subfigure[]{\includegraphics[height=3.2cm]{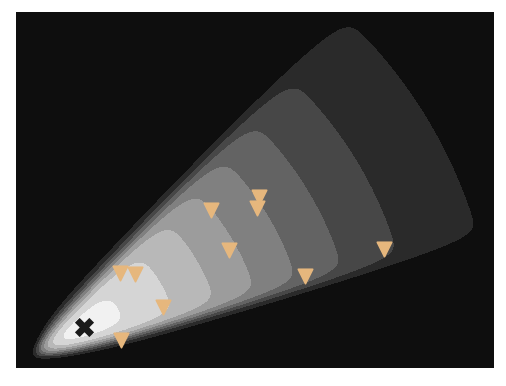} \label{fig:logreg_b}}
  \subfigure[]{\includegraphics[height=3.2cm]{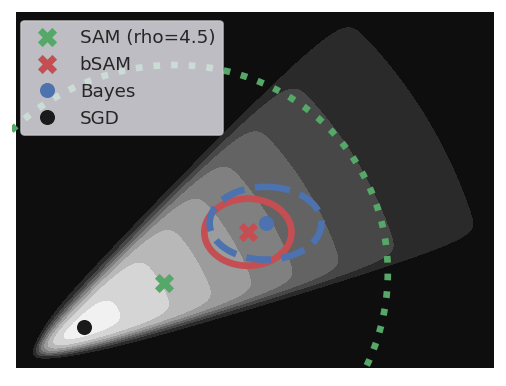} \label{fig:logreg_c}}
  \caption{Posterior shapes for the logistic regression problem shown in~\Cref{sec:toy}.}
\end{figure}

\subsection{Comparison on a Low-Dimensional Classification Problem}
\label{sec:moons}
In~\Cref{fig:moons}, we compare predictive uncertainties of SAM and bSAM on the two-moons classification problem.
We fit a small network ($2-24-12-12-1$). For SAM+Laplace, we consider a diagonal GGN approximation to the Hessian (which was indefinite at the SAM solution)
and use the linearized model for predictions, as suggested by \citet{immer2021improving}.
\begin{figure}[h!]
  \centering
  \subfigure[bSAM]{\includegraphics[height=3cm]{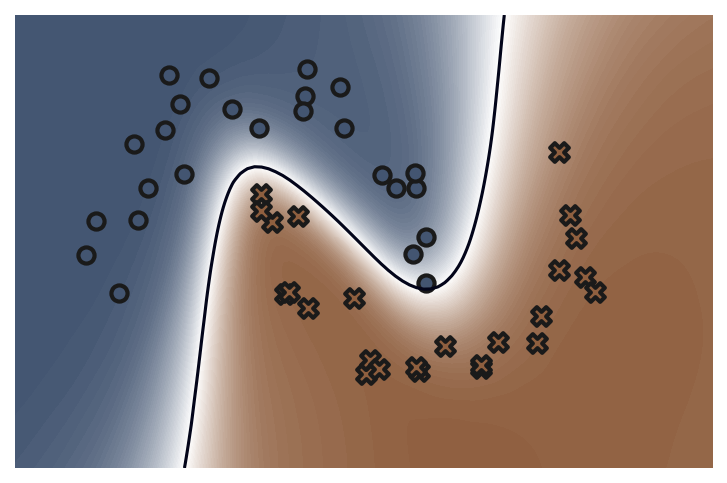}}
  \subfigure[SAM]{\includegraphics[height=3cm]{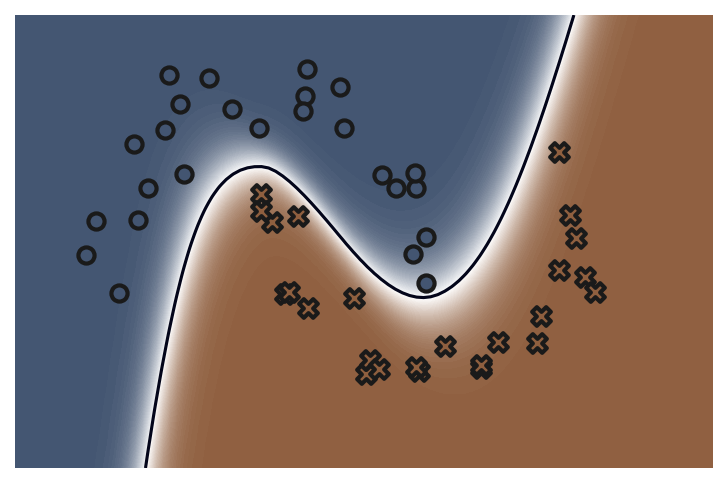}}
  \subfigure[SAM + Laplace]{\includegraphics[height=3cm]{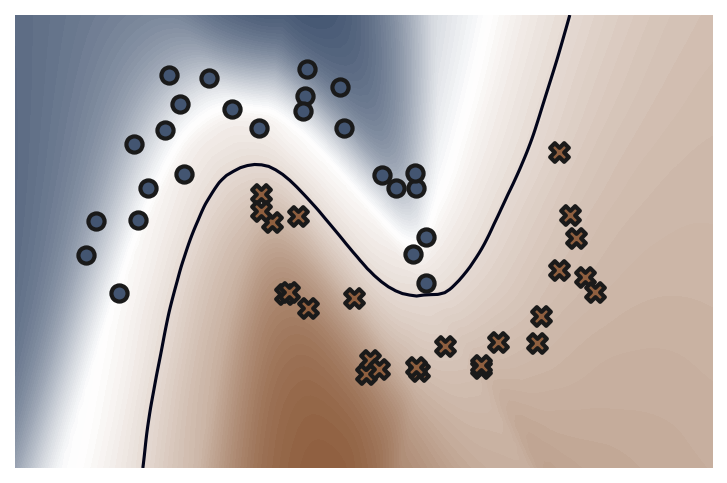}}
\caption{bSAM improves SAM's predictive uncertainty. Using a Laplace approximation at the SAM solution can lead to an overestimation of the predictive uncertainty.}
\label{fig:moons}
\end{figure}

\subsection{Effect of using $f^{**}$ instead of $f$}
\label{app:lowerbound}
In this section, we empirically check the effect of using $f^{**}$ in the Bayesian objective instead of~$f$. To that end, we compute an exact solution to the relaxed problem as well as the original problem~\eqref{eq:elbo} for a full Gaussian variational family.

For small problems, gradients of the biconjugate function can be easily approximated using convex optimization. To see this, notice that:
\begin{align}
  \nabla f^{**}(\vmu) &= \argmin_{\text{$\vlambda' \in \Omega$}} ~ f^*(\vlambda') - \ip{\vlambda'}{\vmu} \nonumber \\
  &= \argmin_{\text{$\vlambda' \in \Omega$}, c \in \R} ~ c - \ip{\vlambda'}{\vmu} ~~\text{ s.t. }~~ c \geq f^*(\vlambda'). \label{eq:lb1}
\end{align}
This is a optimization problem with linear cost function and convex constraints. To implement the constraints, we approximate the loss by a maximum over linear functions, which is always possible for convex losses. The linear functions are first-order approximations of the loss at $L$ points $\bt_i \sim q_{\text{\vmu}}$ drawn from the current variational distribution.
Under this approximation, the constraint in \eqref{eq:lb1} can be written as $L$ seperate constraints for $i \in \{1, \hdots, L \}$ as follows:
\begin{align}
  \sup_{\bt \in \Theta} ~ \bt^\top \vlambda_1' + \bt^\top \vlambda_2' \bt + \ell(\bt_i) + \ip{\nabla \ell(\bt_i)}{\bt - \bt_i} &\leq c \nonumber \\
 \Leftrightarrow -\frac{1}{4} (\vlambda_1' + \nabla \ell(\bt_i))^\top {\vlambda_2'}^{-1} (\vlambda_1' + \nabla \ell(\bt_i)) - c &\leq \ip{\nabla \ell(\bt_i)}{\bt_i} - \ell(\bt_i). \label{eq:lb2}
\end{align}
The convex program \eqref{eq:lb1} with convex constraints \eqref{eq:lb2} is then solved using the software package CVXPY~\citep{DiBo16}
to obtain the gradient of the biconjugate function.

Having the gradients of the biconjugate available, our posterior is obtained by the Bayesian learning rule of~\cite{KhRu21},
\begin{align}
  \vlambda \leftarrow (1 - \alpha) \vlambda + \alpha \begin{cases} \vlambda_0 + \nabla f(\vmu), & \text{ for Bayes~\eqref{eq:elbo},} \\ \vlambda_0 + \nabla f^{**}(\vmu), & \text{ for relaxed-Bayes~\eqref{eq:relaxedBayes}.} \end{cases}
 \label{eq:blrcompare}
\end{align}

\Cref{fig:tightbound} compares both solutions obtained by iterating \eqref{eq:blrcompare} on the logistic regression problem described in~\Cref{app:logreg}. We can see that
the relaxation induces a lower-bound as expected, but the relaxed solution is still a reasonable approximation to the true posterior.
\begin{figure}[h!]
  \centering
    \subfigure[]{\includegraphics[height=4.6cm]{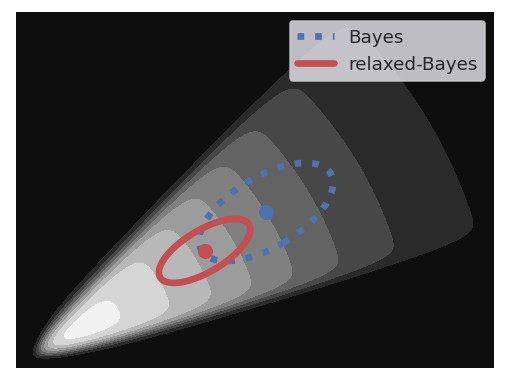} \label{fig:logreg_relax_a}}
    \subfigure[]{\includegraphics[height=4.6cm]{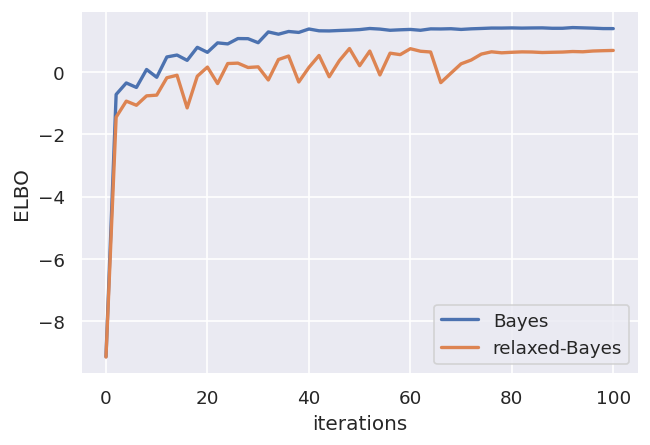} \label{fig:logreg_relax_b}}
    \caption{\Cref{fig:logreg_relax_a} compares the posterior approximations obtained by solving \eqref{eq:elbo} and our lower-bound~\eqref{eq:relaxedBayes} for a full Gaussian variational family.
    \Cref{fig:logreg_relax_b} shows the gap in the evidence-lower bound (ELBO) induced by the relaxation.}
  \label{fig:tightbound}
\end{figure}

\begin{figure}[h!]
\begin{tabular}{cc}
  \includegraphics[width=0.47\linewidth]{figures/cifar_acc.pdf}
  &\includegraphics[width=0.47\linewidth]{figures/cifar_nll.pdf}\\
  \includegraphics[width=0.47\linewidth]{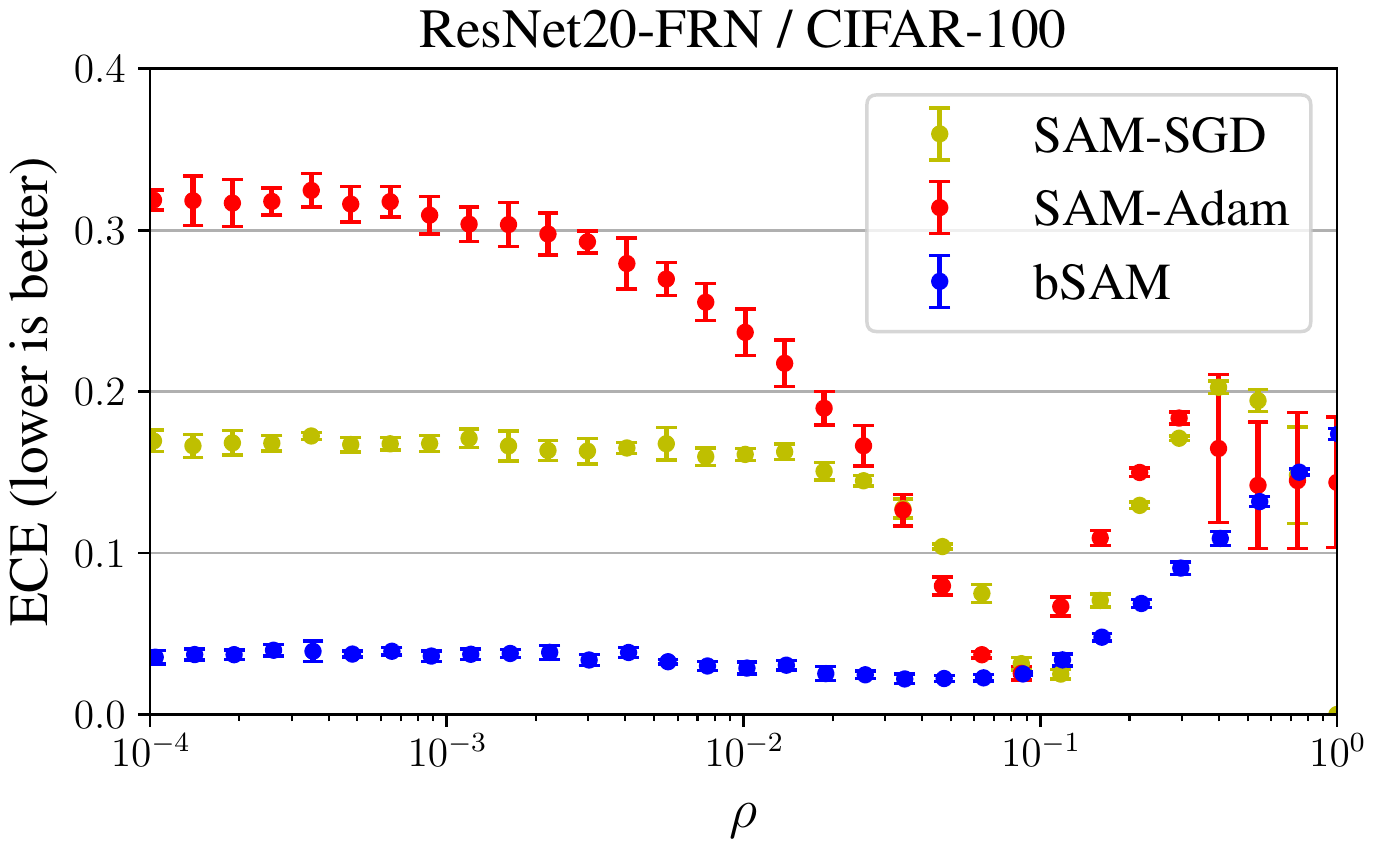}
  &\includegraphics[width=0.47\linewidth]{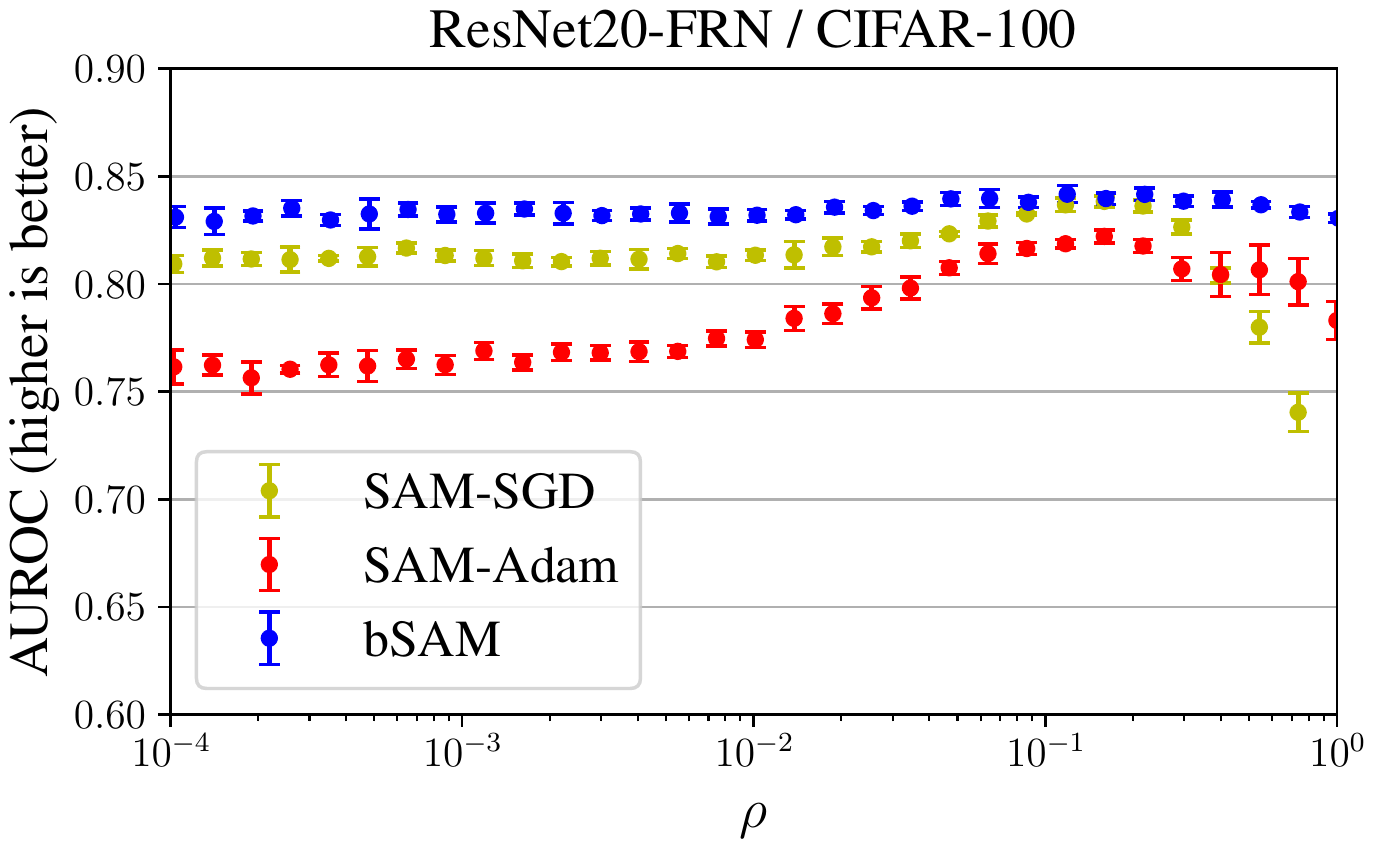}\\
\end{tabular}
\caption{Sensitivity of SAM-variants to $\rho$. \emph{For bSAM, we plot the markers at $100 \cdot \rho$ to roughly align the minima/maxima of the curves.} The proposed bSAM method adapts to shape of the loss and is overall more robust to misspecified parameter $\rho$ while also giving the overall best performance for all four metrics (Accuracy $\uparrow$, NLL $\downarrow$, ECE $\downarrow$, AUROC $\uparrow$).}
\label{fig:sensitivity}
\end{figure}
\subsection{Sensitivity of bSAM to the choice of $\rho$}
\label{app:sensitivity-to-rho}
We show the sensitivity of all considered SAM variants to the hyperparameter $\rho$ in \Cref{fig:sensitivity}.
As the bSAM algorithm learns the variance vector, the method is expected to be less sensitive to the choice of $\rho$.
We confirm this here for a ResNet--20 on the CIFAR-100 dataset without data augmentation.

\subsection{Ablation study: Effect of Gaussian Sampling (Line~3 in \Cref{fig:bSAM})}
\label{app:noisylinearize}

In the derivation of the bSAM algorithm in \Cref{sec:practice} we performed a local linearization in order to approximate the inner problem. There, we claim that performing this local linearization not at the mean but rather at a sampled point is preferable from a Bayesian viewpoint.
A concurrent work by \citet{liu2022random} also shows that combining SAM with random sampling improves the performance.
The following \Cref{table:ablationstudy} confirms that the noisy linearization helps in practice. 
\begin{table}[h!]
	\centering
	\footnotesize
	\setlength{\tabcolsep}{4pt}
        \resizebox{\textwidth}{!}{
	\begin{tabular}{c l r r r r}
		\toprule
		\begin{tabular}{c}
		  Model /  \\
                  Data  \\
		\end{tabular} &
		Method &
		\begin{tabular}{c}
                  Accuracy \textuparrow \\
                 {\scriptsize (higher is better)}
		\end{tabular} &
		\begin{tabular}{c}
			NLL \textdownarrow \\
                 {\scriptsize (lower is better)}
		\end{tabular} &
		\begin{tabular}{c}
                ECE \textdownarrow \\
                 {\scriptsize (lower is better)}
		\end{tabular} 
                &
		\begin{tabular}{c}
                AUROC \textuparrow \\
                 {\scriptsize (higher is better)}
		\end{tabular} \\
                \midrule
		\multirow{2}{*}{
		  \begin{tabular}{c}
                    MLP / \\
                    MNIST 
		\end{tabular}}   
		 & bSAM w/o Noise         & $98.67_{(0.04)}$ & $0.041_{(0.0011)}$ & $0.0030_{(0.0007)}$ & $0.981_{(0.001)}$   \\
		 & bSAM (\Cref{fig:bSAM}) & $98.78_{(0.06)}$ & $0.038_{(0.0012)}$ & $0.0024_{(0.0004)}$ & $0.982_{(0.001)}$   \\
                & \textbf{Improvement:} & $\mathbf{+00.11} \qquad\,$ & $\mathbf{+0.003} \qquad\,\,\,\,\,$ & $\mathbf{+0.0006} \qquad \,\,\,\,$ & $\mathbf{+0.001} \qquad \,\,\,$  \\

               \midrule
		\multirow{2}{*}{
		  \begin{tabular}{c}
                    LeNet-5 / \\
		    FMNIST  
		\end{tabular}}   
		& bSAM w/o Noise           & $92.01_{(0.11)}$ & $0.23_{(0.002)}$ & $0.0128_{(0.0021)}$ & $0.918_{(0.002)}$   \\
		& bSAM (\Cref{fig:bSAM})   & $92.08_{(0.15)}$ & $0.22_{(0.005)}$ & $0.0066_{(0.0022)}$ & $0.920_{(0.003)}$  \\ 
           & \textbf{Improvement:} & $\mathbf{+00.07} \qquad$ & $\mathbf{+0.01} \qquad\,\,$ & $\mathbf{+0.0062} \qquad \,\,\,\,$ & $\mathbf{+0.002} \qquad \,\,\,$  \\
		\midrule
		\multirow{2}{*}{
		  \begin{tabular}{c}
                    ResNet-20-FRN  \\
		    / CIFAR-10 
		\end{tabular}}   
		& bSAM w/o Noise           & $87.74_{(0.28)}$ & $0.38_{(0.009)}$ & $0.0251_{(0.006)}$ & $0.896_{(0.004)}$   \\
		& bSAM (\Cref{fig:bSAM})   & $88.72_{(0.24)}$ & $0.34_{(0.005)}$ & $0.0163_{(0.002)}$ & $0.903_{(0.003)}$  \\ 
         & \textbf{Improvement:} & $\mathbf{+00.98} \qquad$ & $\mathbf{+0.04} \qquad\,\,$ & $\mathbf{+0.0088} \qquad \,\,$ & $\mathbf{+0.007} \qquad \,\,\,$  \\
\midrule
		\multirow{2}{*}{
		  \begin{tabular}{c}
                    ResNet-20-FRN \\
		    / CIFAR-100 \\ 
		\end{tabular}}   
		& bSAM w/o Noise          & $60.30_{(0.33)}$ & $1.44_{(0.009)}$ & $0.0226_{(0.003)}$ & $0.832_{(0.002)}$   \\
                & bSAM (\Cref{fig:bSAM})  & $62.64_{(0.34)}$ & $1.32_{(0.015)}$ & $0.0311_{(0.002)}$ & $0.841_{(0.003)}$   \\ 
          & \textbf{Improvement:} & $\mathbf{+02.34} \qquad\,$ & $\mathbf{+0.12} \qquad\,\,$ & $-0.0085 \qquad \,\,\,$ & $\mathbf{+0.008} \qquad \,\,\,$  \\
		\bottomrule\\
	\end{tabular}
        } 
   \caption{Performing the loss-linearization at the noisy point rather than at the mean (``bSAM w/o Noise'') typically improves the results with respect to all metrics.}
	\label{table:ablationstudy}
\end{table}

\subsection{Ablation study: Effect of Bayesian model averaging}
\label{app:marginal}

In~\Cref{table:marginal} we show how the predictive performance and uncertainty of bSAM depends on the number of MC samples used to approximate
the integral in the Bayesian marginalization.

\begin{table}[h!]
	\centering
	\footnotesize
	\setlength{\tabcolsep}{4pt}
	\begin{tabular}{c l r r r r}
		\toprule
		\begin{tabular}{c}
		  Model /  
                  Data  \\
		\end{tabular} &
		Samples &
		\begin{tabular}{c}
                  Accuracy \textuparrow \\
                 {\scriptsize (higher is better)}
		\end{tabular} &
		\begin{tabular}{c}
			NLL \textdownarrow \\
                 {\scriptsize (lower is better)}
		\end{tabular} &
		\begin{tabular}{c}
                ECE \textdownarrow \\
                 {\scriptsize (lower is better)}
		\end{tabular} 
                &
		\begin{tabular}{c}
                AUROC \textuparrow \\
                 {\scriptsize (higher is better)}
		\end{tabular} \\
		\midrule
		\multirow{2}{*}{
		  \begin{tabular}{c}
                    \\  ResNet-20-FRN /  \\
		    CIFAR-10 
		\end{tabular}}   
          	& $0$ (mean)  & $88.34_{(0.22)}$ & $0.392_{(0.007)}$ & $0.0538_{(0.002)}$ & $0.9025_{(0.005)}$ \\
		& $8$         & $88.43_{(0.21)}$ & $0.349_{(0.006)}$ & $0.0212_{(0.002)}$ & $0.9038_{(0.005)}$ \\ 
		& $64$        & $88.59_{(0.20)}$ & $0.339_{(0.005)}$ & $0.0186_{(0.001)}$ & $0.9056_{(0.005)}$ \\ 
		& $512$       & $88.69_{(0.20)}$ & $0.338_{(0.005)}$ & $0.0173_{(0.002)}$ & $0.9047_{(0.005)}$ \\ 
		& $2048$       & $88.63_{(0.20)}$ & $0.338_{(0.005)}$ & $0.0173_{(0.001)}$ & $0.9058_{(0.005)}$ \\ 
		\bottomrule\\
	\end{tabular} 
   \caption{Increasing the number of MC samples to approximate the marginal Bayesian predictive distribution tends to improve the performance in terms of NLL and ECE.}
	\label{table:marginal}
\end{table}

\subsection{Ablation study: Dependance of SAM and bSAM on ``m-sharpness''}
\label{app:m-sharpness}
It has been observed in~\cite{FoKl21} that the performance of SAM depends on the number $m$ of subbatches
the minibatch is split into. This phenomenon is referred to as \emph{$m$-sharpness}. From a theoretical viewpoint, a larger value of $m$ corresponds to a \emph{looser} lower-bound for bSAM,
as the sum of biconjugates is always smaller than the biconjugate of the sum.

Here we investigate the performance of SAM
and bSAM on the parameter $m$ for a ResNet-20-FRN on CIFAR-100 without data augmentation. The other parameters are chosen as before,
and the batch-size is set to $128$. The results are summarized in the following \Cref{table:msharpness}. 
\begin{table}[h!]
	\centering
	\footnotesize
	\setlength{\tabcolsep}{4pt}
	\begin{tabular}{c l r r r r}
		\toprule
		\begin{tabular}{c}
		  Algorithm  \\
		\end{tabular} &
		$m$ &
		\begin{tabular}{c}
                  Accuracy \textuparrow \\
                 {\scriptsize (higher is better)}
		\end{tabular} &
		\begin{tabular}{c}
			NLL \textdownarrow \\
                 {\scriptsize (lower is better)}
		\end{tabular} &
		\begin{tabular}{c}
                ECE \textdownarrow \\
                 {\scriptsize (lower is better)}
		\end{tabular} 
                &
		\begin{tabular}{c}
                AUROC \textuparrow \\
                 {\scriptsize (higher is better)}
		\end{tabular} \\
		\midrule
		\multirow{2}{*}{
		  \begin{tabular}{c}
                    \\ \\  bSAM
		\end{tabular}}   
          	& $1$  & $61.38_{(0.27)}$ & $1.39_{(0.007)}$ & $0.0264_{(0.002)}$ & $0.836_{(0.003)}$ \\
          	& $2$  & $61.47_{(0.46)}$ & $1.38_{(0.002)}$ & $0.0233_{(0.003)}$ & $0.837_{(0.003)}$ \\
          	& $4$  & $62.35_{(0.32)}$ & $1.34_{(0.014)}$ & $\mathbf{0.0208}_{(0.003)}$ & $0.837_{(0.003)}$ \\
          	& $8$  & $62.81_{(0.57)}$ & $1.31_{(0.017)}$ & $0.0327_{(0.004)}$ & $\mathbf{0.842}_{(0.003)}$ \\
          	& $16$  & $\mathbf{63.31}_{(0.32)}$ & $\mathbf{1.30}_{(0.011)}$ & $0.0592_{(0.003)}$ & $0.840_{(0.003)}$ \\
		\midrule
		\multirow{2}{*}{
		  \begin{tabular}{c}
                   \\  \\  SAM-SGD
		\end{tabular}}   
          	& $1$  & $55.88_{(0.81)}$ & $1.82_{(0.032)}$ & $0.1501_{(0.004)}$ & $0.820_{(0.005)}$ \\
          	& $2$  & $56.66_{(0.74)}$ & $1.76_{(0.041)}$ & $0.1421_{(0.006)}$ & $0.819_{(0.005)}$ \\
          	& $4$  & $56.53_{(1.09)}$ & $1.73_{(0.058)}$ & $0.1289_{(0.006)}$ & $0.810_{(0.003)}$ \\
          	& $8$  & $58.42_{(0.66)}$ & $1.60_{(0.030)}$ & $0.1017_{(0.005)}$ & $0.825_{(0.003)}$ \\
          	& $16$  & $\mathbf{59.51}_{(0.67)}$ & $\mathbf{1.51}_{(0.040)}$ & $\mathbf{0.0681}_{(0.006)}$ & $\mathbf{0.829}_{(0.004)}$ \\
		\bottomrule\\
	\end{tabular} 
   \caption{Effect of ``$m$-sharpness'' mini-batch size for SAM-SGD and bSAM for a ResNet-20-FRN on CIFAR-100 without data augmentation. As the effective minibatch size decreases, all performance metrics (accuracy and uncertainty) tend to improve. This boost in performance is not captured by our theory, and understanding it is an interesting direction for future work.}
	\label{table:msharpness}
\end{table}

\begin{table}[t!]
	\centering
	\footnotesize
	\setlength{\tabcolsep}{4pt}
	\begin{tabular}{c l r r r r}
		\toprule
		\begin{tabular}{c}
		  Model /  \\
                  Data  \\
		\end{tabular} &
		Method & 
		\begin{tabular}{c}
                  Accuracy \textuparrow \\
                 {\scriptsize (higher is better)}
		\end{tabular} &
		\begin{tabular}{c}
			NLL \textdownarrow \\
                 {\scriptsize (lower is better)}
		\end{tabular} &
		\begin{tabular}{c}
                ECE \textdownarrow \\
                 {\scriptsize (lower is better)}
		\end{tabular} 
                &
		\begin{tabular}{c}
                AUROC \textuparrow \\
                 {\scriptsize (higher is better)}
		\end{tabular}
          \\
                \midrule
		\multirow{3}{*}{
		  \begin{tabular}{c}
                    \\ \\ \\
                          MLP  \\
			  MNIST \\ 
		\end{tabular}}   
		      & SGD         & $98.63_{(0.06)}$ & $0.044_{(0.0012)}$ & $\mathbf{0.0028}_{(0.0007)}$ & $0.979_{(0.002)}$  \\
		      & SAM-SGD     & $\mathbf{98.82}_{(0.03)}$ & $\mathbf{0.039}_{(0.0005)}$ & $0.0031_{(0.0002)}$ & $0.978_{(0.003)}$  \\
                      & SWAG        & $98.65_{(0.04)}$ & $0.044_{(0.0002)}$ & $\mathbf{0.0028}_{(0.0005)}$ & $0.976_{(0.003)}$   \\
                      & VOGN        & $98.54_{(0.03)}$ & $0.046_{(0.0008)}$ & $\mathbf{0.0033}_{(0.0007)}$ & $\mathbf{0.980}_{(0.002)}$  \\
  		      & Adam        & $98.41_{(0.06)}$ & $0.050_{(0.0012)}$ & $0.0036_{(0.0007)}$ & $0.979_{(0.002)}$  \\
                      & SAM-Adam    & $98.58_{(0.03)}$ & $0.046_{(0.0009)}$ & $0.0044_{(0.0005)}$ & $\mathbf{0.980}_{(0.002)}$   \\
 \rowcolor{tablecolor} & bSAM (ours)& $\mathbf{98.78}_{(0.06)}$ & $\mathbf{0.038}_{(0.0011)}$ & $\mathbf{0.0024}_{(0.0004)}$ & $\mathbf{0.982}_{(0.001)}$   \\                \midrule
		\multirow{3}{*}{
		  \begin{tabular}{c}
                    \\ \\ \\
                    LeNet-5  \\
		    FMNIST \\ 
		\end{tabular}}   
		      & SGD         & $91.37_{(0.27)}$ & $0.32_{(0.007)}$ & $0.0429_{(0.0015)}$ & $0.897_{(0.004)}$   \\
		      & SAM-SGD     & $\mathbf{91.95}_{(0.17)}$ & $\mathbf{0.22}_{(0.004)}$ & $\mathbf{0.0062}_{(0.0007)}$ & $\mathbf{0.917}_{(0.002)}$  \\
                      & SWAG        & $91.38_{(0.27)}$ & $0.31_{(0.005)}$ & $0.0397_{(0.0026)}$ & $0.901_{(0.005)}$  \\
                      & VOGN        & $91.24_{(0.25)}$ & $0.24_{(0.004)}$ & $\mathbf{0.0071}_{(0.0012)}$ & $\mathbf{0.916}_{(0.002)}$   \\
		      & Adam        & $91.14_{(0.25)}$ & $0.33_{(0.005)}$ & $0.0450_{(0.0008)}$ & $0.897_{(0.005)}$  \\
                      & SAM-Adam    & $91.66_{(0.19)}$ & $0.25_{(0.004)}$ & $0.0225_{(0.0026)}$ & $0.913_{(0.002)}$   \\
\rowcolor{tablecolor} & bSAM (ours) & $\mathbf{92.10}_{(0.26)}$ & $\mathbf{0.22}_{(0.005)}$ & $\mathbf{0.0066}_{(0.0022)}$ & $\mathbf{0.920}_{(0.002)}$  \\ 
		\bottomrule\\
	\end{tabular}
  \caption{Comparison on small datasets.}
	\label{table:mnist}
\end{table}

\subsection{Experiments on MNIST}
\label{app:mnist}
In \Cref{table:mnist} we compare bSAM to several baselines for smaller neural networks on MNIST and FashionMNIST.  Also in these smaller settings, bSAM performs competitively to state-of-the-art.

\section{Choice of Hyperparameters and Other Details}
\label{app:hyperparameters}

\subsection{Experiments in~\Cref{table:improvesam} and~\Cref{table:mnist}}
In the following, we list the details of the experimental setup. First, we provide some general remarks, followed by tables of the detailed parameters used on each dataset.

For all experiments, the hyperparameters are selected using a
grid-search over a moderate amount of configurations to find the best
validation accuracy. Our neural network outputs the natural parameters of the categorical distribution as a minimal exponential family (number of classes minus one output neurons). The loss function is the negative log-likelihood.

We always use a batch-size of
$B = 128$. For SAM-SGD, SAM-Adam and bSAM we split each
minibatch into $m=8$ subbatches, for VOGN we set $m=16$ and consider
independently computed perturbations for each subbatch. 
Finally, to demonstrate that bSAM does not require an excessive amount of
tuning and parameters transfer well, we use the same hyper-parameters found on
CIFAR-10 for the CIFAR-100 experiments.

We summarize all hyperparameters in \Cref{tab:hyperhyper}. 

\paragraph{MNIST.}
The architecture is a 784 -- 500 -- 300 -- 9 fully-connected
multilayer perceptron (MLP) with ReLU activation functions. All methods are trained for $75$ epochs. We use a cosine learning rate decay scheme, annealing the learning rate to zero. The hyperparameters of the
individual methods are shown in \Cref{tab:hyperhyper}. For SWAG, we run SGD for $60$ epochs (with the same parameters). Then collect SWAG statistics with fixed $\alpha = 0.01$, $\beta_1 = 0.95$. For all SWAG runs, we set the rank to $20$, and collect a sample every $100$ steps for $15$ epochs. 

\paragraph{FashionMNIST.} The architecture is a LeNet--5 with ReLU activations. We use a cosine learning rate decay scheme, annealing the learning rate to zero. We train all methods for 120 epochs. For SWAG, we run SGD for $105$ epochs and collect $15$ epochs with $\alpha = 0.001$ and $\beta_1 = 0.8$. 

\paragraph{CIFAR-10/100.} The architecture is a ResNet-20 with filter response normalization nonlinearity adopted from~\citet{IzVi21}. The ResNet-20 has the same number of parameters as the one used in the original publication~\citep{HeZh16}. We train for $180$ epochs and decay the learning rate by factor $10$ at epoch $100$ and $120$. For SWAG, we run SGD for $165$ epochs and collect for $15$ epochs with $\alpha = 0.0001$, $\beta_1 = 0.9$. 

\subsection{Experiments in \Cref{table:dataaugmentation}}
The bSAM method depends on the number of data-samples $N$. To account for the random
  cropping and horizontal flipping, we rescale this number by a factor~$4$ which improved the performance. This corresponds to a tempered posterior, as also suggested in~\cite{OsSw19}.

  In all experiments, all methods use cosine learning rate schedule which anneals the learning rate across $180$ epochs to zero. The batch size is again chosen as $B = 128$.

  For an overview over all the hyperparameters, please see~\Cref{tab:aug}.

    \begin{table}
  \centering
  \footnotesize
  \setlength{\tabcolsep}{10pt}
  \begin{center}
    \begin{tabular}{clllllll}
  \toprule 
  Configuration & Method & $\alpha$ & $\beta_1$ & $N \cdot \delta$ & $\rho$ & $\beta_2$ & $\gamma$ \\
  \midrule 
  & SGD & $0.1$ & $0.95$ & $24$ & $\times$ & $\times$ & $\times$ \\
   & VOGN & $0.0005$ & $0.95$ & $25$ & $\times$ & $0.999$ & $0.05$ \\
 MNIST / & Adam & $0.0005$ & $0.8$ & $24$ &  $\times$ & $\bigcirc$ & $\bigcirc$ \\
   MLP & SAM-SGD & $0.1$ & $0.95$ & $30$ & $0.05$ & $\times$ & $\times$ \\
   & SAM-Adam &  $0.0005$ & $0.8$ & $24$ & $0.02$ & $\bigcirc$ & $\bigcirc$ \\
  & bSAM &  $0.1$ & $0.8$ & $15$ & $0.01$ & $\bigcirc$ & $\bigcirc$ \\
  \midrule
  & SGD & $0.01$ & $0.8$ & $60$ & $\times$ & $\times$ & $\times$ \\
   & VOGN & $0.0001$ & $0.95$ & $25$ & $\times$ & $0.99$ & $0.05$ \\
  FashionMNIST / & Adam & $0.001$ & $0.9$ & $60$ &  $\times$ & $\bigcirc$ & $\bigcirc$ \\
   LeNet-5 & SAM-SGD & $0.05$ & $0.8$ & $60$ & $0.05$ & $\times$ & $\times$ \\
   & SAM-Adam &  $0.001$ & $0.9$ & $60$ & $0.02$ & $\bigcirc$ & $\bigcirc$ \\
  & bSAM &  $0.1$ & $0.95$ & $50$ & $0.002$ & $\bigcirc$ & $\bigcirc$ \\
  \midrule
  & SGD & $0.1$ & $0.9$ & $25$ & $\times$ & $\times$ & $\times$ \\
   & VOGN & $0.001$ & $0.9$ & $25$ & $\times$ & $0.999$ & $0.01$ \\
  CIFAR-10 / & Adam & $0.001$ & $0.9$ & $250$ &  $\times$ & $\bigcirc$ & $\bigcirc$ \\
  ResNet-20-FRN & SAM-SGD & $0.05$ & $0.9$ & $25$ & $0.05$ & $\times$ & $\times$ \\
   & SAM-Adam &  $0.0003$ & $0.9$ & $25$ & $0.1$ & $\bigcirc$ & $\bigcirc$ \\
  & bSAM &  $0.1$ & $0.9$ & $25$ & $0.001$ & $\bigcirc$ & $\bigcirc$ \\
  \midrule
  & SGD & $0.1$ & $0.9$ & $30$ & $\times$ & $\times$ & $\times$ \\
   & VOGN & $0.001$ & $0.9$ & $25$ & $\times$ & $0.999$ & $0.01$ \\
  CIFAR-100 / & Adam & $0.001$ & $0.9$ & $250$ &  $\times$ & $\bigcirc$ & $\bigcirc$ \\
  ResNet-20-FRN & SAM-SGD & $0.05$ & $0.9$ & $25$ & $0.05$ & $\times$ & $\times$ \\
   & SAM-Adam &  $0.0003$ & $0.9$ & $25$ & $0.1$ & $\bigcirc$ & $\bigcirc$ \\
  & bSAM &  $0.1$ & $0.9$ & $25$ & $0.001$ & $\bigcirc$ & $\bigcirc$ \\
  \bottomrule\\
    \end{tabular}
  \end{center}
  \caption{Hyperparameters used in the experiments without data augmentation. ``$\times$'' denotes that this method does not use that hyperparameter. ``$\bigcirc$'' indicates
  fixed hyperparameter across all datasets. }
  \label{tab:hyperhyper}
  \end{table}
  
\begin{table}
  \centering
  \footnotesize
  \setlength{\tabcolsep}{10pt}
  \begin{center}
    \begin{tabular}{clllllll}
  \toprule 
  Dataset & Method & $\alpha$ & $\beta_1$ & $N \cdot \delta$ & $\rho$ & $\beta_2$ & $\gamma$ \\
  \midrule 
  & SGD & $0.1$ & $0.95$ & $10$ & $\times$ & $\times$ & $\times$ \\
   & Adam & $0.005$ & $0.7$ & $2.5$ &  $\times$ & $\bigcirc$ & $\bigcirc$ \\
  \textbf{CIFAR-10} & SAM-SGD & $0.03$ & $0.95$ & $25$ & $0.01$ & $\times$ & $\times$ \\
   & SAM-Adam &  $0.001$ & $0.8$ & $10$ & $0.1$ & $\bigcirc$ & $\bigcirc$ \\
  & bSAM &  $1.0$ & $0.95$ & $2$ & $0.01$ & $\bigcirc$ & $\bigcirc$ \\
  \midrule 
  & SGD & $0.03$ & $0.95$ & $25$ &  $\times$ & $\times$ & $\times$ \\
   & Adam & $0.005$ & $0.9$ & $1$ &  $\times$ & $\bigcirc$ & $\bigcirc$\\
  \textbf{CIFAR-100} & SAM-SGD & $0.2$ & $0.8$ & $10$ & $0.02$ & $\times$ & $\times$ \\
   & SAM-Adam &  $0.005$ & $0.7$ & $1$ & $0.2$ & $\bigcirc$ & $\bigcirc$\\
  & bSAM &  $1.0$ & $0.95$ & $2$ & $0.01$ & $\bigcirc$ & $\bigcirc$\\
  \midrule 
  & SGD & $0.1$ & $0.95$ & $20$ &  $\times$ & $\times$ & $\times$ \\
    & Adam & $0.002$ & $0.9$ & $20$ &  $\times$ & $\bigcirc$ & $\bigcirc$\\
\textbf{TinyImageNet} & SAM-SGD & $0.1$ & $0.95$ & $50$ & $0.01$ & $\times$ & $\times$ \\
   & SAM-Adam &  $0.001$ & $0.95$ & $10$ & $0.1$ & $\bigcirc$ & $\bigcirc$\\
  & bSAM &  $0.1$ & $0.9$ & $25$ & $0.00005$ & $\bigcirc$ & $\bigcirc$\\
  \bottomrule\\
    \end{tabular}
  \end{center}
  \caption{Hyperparameters used in the experiments with data augmentation shown in
    \Cref{table:dataaugmentation}. ``$\times$'' denotes that this method does not use that hyperparameter.  ``$\bigcirc$'' indicates
  fixed hyperparameter across all datasets. }
  \label{tab:aug}
\end{table}

\subsection{ResNet--18 experiments in \Cref{table:dataaugmentation}}
We trained all methods over $180$ epochs with a cosine learning rate scheduler, annealing the learning rate to zero. The batch-size is set to $200$,
and both SAM and bSAM use $m$-sharpness with $m=8$.
\begin{table}[h!]
  \centering
  \footnotesize
  \setlength{\tabcolsep}{10pt}
  \begin{center}
    \begin{tabular}{clllllll}
  \toprule 
  Dataset & Method & $\alpha$ & $\beta_1$ & $N \cdot \delta$ & $\rho$ & $\beta_2$ & $\gamma$ \\
  \midrule 
  & SGD & $0.03$ & $0.9$ & $25$ & $\times$ & $\times$ & $\times$ \\
  \textbf{CIFAR-10} & SAM-SGD & $0.03$ & $0.9$ & $25$ & $0.05$ & $\times$ & $\times$ \\
  & bSAM &  $0.5$ & $0.9$ & $10$ & $0.01$ & $\bigcirc$ & $\bigcirc$ \\
  \midrule 
  & SGD & $0.03$ & $0.9$ & $25$ &  $\times$ & $\times$ & $\times$ \\
  \textbf{CIFAR-100} & SAM-SGD & $0.03$ & $0.9$ & $25$ & $0.05$ & $\times$ & $\times$ \\
  & bSAM &  $0.5$ & $0.9$ & $10$ & $0.01$ & $\bigcirc$ & $\bigcirc$\\
  \bottomrule\\
    \end{tabular}
  \end{center}
  \caption{Hyperparameters used in the ResNet--18 experiments of~\Cref{table:dataaugmentation}. ``$\times$'' denotes that this method does not use that hyperparameter.  ``$\bigcirc$'' indicates
  fixed hyperparameter across all datasets. }
  \label{tab:parambigresnet}
\end{table}

\section{Table of Notations}
\label{app:notation}
The following \Cref{table:notations} provides an overview of the notations used throughout the paper. Multidimensional quantities are written in
\textbf{bold-face}, scalars and scalar-valued functions are regular face.
\newcommand{\bslash}{\char`\\}
\setlength{\tabcolsep}{8pt}
\renewcommand*{\arraystretch}{1.3}
\begin{table}[h!]
  \centering
\begin{tabular}{ll}
\hline
\textbf{Symbol} & \textbf{Description} \\
\hline
$\ip{\cdot}{\cdot}$ & An inner-product on a finite-dimensional vector-space. \\
$\norm{\cdot}$ & The $\ell_2$-norm. \\
$f^*$, $f^{**}$ & Fenchel conjugate and biconjugate of the function $f$. \\
$\vparam$ & Parameter vector of the model, e.g., the neural network. \\
$\vT(\vparam)$ & Sufficient statistics of an exponential family distribution over the parameters. \\
$\vmu, \vmu', \vmu''$ & Expectation (or mean/moment) parameters of exponential family. \\
$\vlambda, \vlambda', \vlambda''$ & Natural parameters of an exponential family.\\
$\Omega$ & Space of valid natural parameters. \\
$\cM$ & Space of valid expectation parameters. \\
$q_{\text{\vmu}}$ & Exponential family distribution with moments $\vmu$. \\
$q_{\text{\vlambda}}$ & Exponential family distribution with natural parameters $\vlambda$. \\
$A(\vlambda)$ & Log-partition function of the exponential family in natural parameters. \\
$A^*(\vmu)$ & Entropy of $q_{\text{\vmu}}$, the convex conjugate of the log-partition function. \\
$\vomega$, $v$ & Mean and variance of a normal distribution, corresponding to $\vmu$ and $\vlambda$. \\
$\vm$, $\sigma^2$ & Mean and variance of another normal, corresponding to $\vmu'$ and $\vlambda'$. \\
$\cN(\ve | \vm, \sigma^2 \Id)$ & $\ve$ follows a normal distribution with mean $\vm$ and covariance $\sigma^2 \Id$. \\ 
$\delta$ & Precision of an isotropic Gaussian prior. \\
\hline
\end{tabular}
\caption{Notation.}
\label{table:notations}
\end{table}

\end{document}